\newif\ifsup\supfalse
\definecolor{dkblue}{cmyk}{1,.54,.04,.19} 
\newif\iftikz\tikztrue
\newcommand{\commentout}[1]{}
\newcommand{\junk}[1]{}
\newcommand{\etal}{\emph{et al.}}
\newcommand{\erf}{\operatorname{erf}}
\newtheorem{theorem}{Theorem}
\newtheorem{lemma}{Lemma}
\newtheorem{assumption}{Assumption}
\newtheorem{remark}{Remark}
\newcommand{\cA}{\mathcal{A}}
\newcommand{\cF}{\mathcal{F}}
\newcommand{\cI}{\mathcal{I}}
\newcommand{\cP}{\mathcal{P}}
\newcommand{\bbP}{\mathbb{P}}
\newcommand{\abs}[1]{\left|#1\right|}
\newcommand{\Ep}{\mathbb{E}}
\newcommand{\E}[1]{\mathbb{E} \left[#1\right]}
\newcommand{\Esub}[2]{\mathbb{E}_{#2} \! \left[#1\right]}
\newcommand{\KL}{\operatorname{KL}}
\newcommand{\one}[1]{\mathds{1} \! \left\{#1\right\}}
\newcommand{\set}[1]{\left\{#1\right\}}
\newcommand{\citeauthornum}[1]{\citeauthor{#1} \cite{#1}}
\mathchardef\mhyphen="2D
\newcommand{\batchrank}{{\tt BatchRank}}
\newcommand{\cascadeklucb}{{\tt CascadeKL\mhyphen UCB}}
\newcommand{\toprank}{{\tt TopRank}}
\newcommand{\EE}{\mathbb{E}}
\begin{document}

\title{TopRank: A Practical Algorithm for Online Stochastic Ranking}
\author{}
\author{Tor Lattimore \\ DeepMind \\ \And  Branislav Kveton \\ Google \\ \And Shuai Li \\ The Chinese University of Hong Kong \\ \And Csaba Szepesv\'ari \\ DeepMind and University of Alberta}

\maketitle

\begin{abstract}
Online learning to rank is a sequential decision-making problem where in each round the learning agent chooses a list of items and receives feedback
in the form of clicks from the user. Many sample-efficient algorithms have been proposed for this problem that assume a specific click model connecting
rankings and user behavior. We propose a generalized click model that encompasses many 
existing models, including the position-based and cascade models. Our generalization motivates a novel online learning algorithm 
based on topological sort, which we call $\toprank$. $\toprank$ is \textit{(a)} more natural than existing algorithms, \textit{(b)} 
has stronger regret guarantees than existing algorithms with comparable generality, \textit{(c)} has a more insightful proof that 
leaves the door open to many generalizations, and \textit{(d)} outperforms existing algorithms empirically.
\end{abstract}

%!TEX root = Paper.tex

\section{Introduction}
\label{sec:introduction}

Learning to rank is an important problem with numerous applications in web search and recommender systems \cite{liu11learning}. Broadly speaking, 
the goal is to learn an ordered list of $K$ items from a larger collection of size $L$ that maximizes the satisfaction of the user, often conditioned 
on a query. This problem has traditionally been studied in the offline setting, where the ranking policy is learned from manually-annotated 
relevance judgments. It has been observed that the feedback of users can be used to significantly improve existing ranking 
policies \cite{agichtein06improving,zoghi16clickbased}. This is the main motivation for online learning to rank, where the goal is to 
adaptively maximize the user satisfaction.

Numerous methods have been proposed for online learning to rank, both in the adversarial \cite{radlinski08learning,slivkins13ranked} and stochastic settings.
Our focus is on the stochastic setup where recent work has leveraged click models to mitigate the curse of dimensionality that arises from the combinatorial nature of the action-set.
A click model is a model for how users click on items in rankings and is widely studied by the information 
retrieval community \cite{chuklin15click}. One popular click model in learning to rank is the cascade model (CM), which assumes that the user scans the ranking from top to bottom,
clicking on the first item they find attractive \cite{kveton15cascading,combes15learning,kveton15combinatorial,zong16cascading,li16contextual,katariya16dcm}. 
Another model is the position-based model (PBM), where the probability that the user clicks on an item depends on its position and attractiveness, but not on the surrounding items \cite{lagree16multipleplay}.

The cascade and position-based models have relatively few parameters, which is both a blessing and a curse. On the positive side, a small model is easy to learn. More negatively, there is a danger
that a simplistic model will have a large approximation error. In fact, it has been observed experimentally that no single existing click model captures the behavior of an entire population of users
\cite{grotov15comparative}. \citeauthornum{zoghi17online} recently showed that under reasonable assumptions
a single online learning algorithm can learn the optimal list of items in a much larger class of click models that includes both the cascade and position-based models.

We build on the work of \citeauthornum{zoghi17online} and generalize it non-trivially in multiple directions. First, we propose a general 
model of user interaction where the problem of finding most attractive list can be posed as a sorting problem with noisy feedback. An interesting characteristic 
of our model is that the click probability does not factor into the examination probability of the position and the attractiveness 
of the item at that position. Second, we propose an online learning algorithm for finding the most attractive list, which we call $\toprank$. 
The key idea in the design of the algorithm is to maintain a partial order over the items that is refined as the algorithm observes more data.  
The new algorithm is simultaneously simpler, more principled and empirically outperforms the algorithm of \citeauthornum{zoghi17online}.
We also provide an analysis of the cumulative regret of \toprank\ that is simple, insightful and strengthens the results by
\citeauthornum{zoghi17online}, despite the weaker assumptions.

%!TEX root = Paper.tex

\section{Online learning to rank}
\label{sec:background}
We assume the total numbers of items $L$ is larger than the number of available slots $K$ and that the collection of items is $[L] = \{1,2,\ldots, L\}$.
A permutation on finite set $X$ is an invertible function $\sigma:X \to X$ and the set of all permutations on $X$ is denoted by $\Pi(X)$. 
The set of actions $\cA$ is the set of permutations $\Pi([L])$, where for each $a \in \cA$ the value $a(k)$ should be interpreted as the identity of the item placed
at the $k$th position. Equivalently, item $i$ is placed at position $a^{-1}(i)$. The user does not observe items in positions $k > K$ so the 
order of $a(k+1),\ldots,a(L)$ is not important and is included only for notational convenience. We adopt the convention throughout that $i$ and $j$ 
represent items while $k$ represents a position.  

The online ranking problem proceeds over $n$ rounds. In each round $t$ the learner chooses an action $A_t \in \cA$ based on its observations so far
and observes binary random variables $C_{t1}, \ldots, C_{tL}$ where $C_{ti} = 1$ if the user clicked on item $i$. 
We assume a stochastic model where the probability that the user clicks on position $k$ in round $t$ only depends on $A_t$
and is given by 
\begin{align*}
\bbP(C_{tA_t(k)} = 1 \mid A_t = a) = v(a, k)
\end{align*}
with $v : \cA \times [L] \to [0,1]$ an unknown function. Another way of writing this is that the conditional probability that the user clicks on 
item $i$ in round $t$ is $\bbP(C_{ti} = 1 \mid A_t = a) = v(a, a^{-1}(i))$.

The performance of the learner is measured by the expected cumulative regret, which is the deficit suffered by the learner relative to the omniscient strategy that knows the optimal ranking
in advance.
\begin{align*}
R_n = n \max_{a \in \cA} \sum_{k=1}^K v(a, k) - \E{\sum_{t=1}^n \sum_{i=1}^L C_{ti}} = \max_{a \in \cA} \EE\left[\sum_{t=1}^n \sum_{k=1}^K (v(a, k) - v(A_t, k))\right] \,.
\end{align*}

\begin{remark}
We do not assume that $C_{t1},\ldots,C_{tL}$ are independent or that the user can only click on one item.
\end{remark}

%!TEX root = Paper.tex

\section{Modeling assumptions}
\label{sec:model}

In previous work on online learning to rank it was assumed that $v$ factors into $v(a, k) = \alpha(a(k)) \chi(a, k)$ where $\alpha : [L] \to [0,1]$ is the attractiveness function and
$\chi(a, k)$ is the probability that the user examines position $k$ given ranking $a$. Further restrictions are made on the examination function $\chi$. 
For example, in the document-based model it is assumed that $\chi(a, k) = \one{k \leq K}$.
In this work we depart from this standard by making assumptions directly on $v$. The assumptions are sufficiently relaxed that the model subsumes the document-based, position-based and cascade
models, as well as the factored model studied by \citeauthor{zoghi17online} \cite{zoghi17online}. \ifsup See the appendix for a proof of this. \else See the supplementary material for a proof of this. \fi
Our first assumption uncontroversially states that the user does not click on items they cannot see.

\begin{assumption}\label{ass:zero}
$v(a, k) = 0$ for all $k > K$.
\end{assumption}

Although we do not assume an explicit factorization of the click probability into attractiveness and examination functions, we do assume there exists an unknown attractiveness 
function $\alpha : [L] \to [0,1]$ that satisfies the following assumptions. In all classical click models the optimal ranking is to sort the items in order of decreasing attractiveness.
Rather than deriving this from other assumptions, we will simply assume that $v$ satisfies this criteria.
We call action $a$ optimal if $\alpha(a(k)) = \max_{k' \geq k} \alpha(a(k'))$ for all $k \in [K]$. The optimal action need not be unique if $\alpha$ is not injective, but the sequence 
$\alpha(a(1)),\ldots,\alpha(a(K))$ is the same for all optimal actions.

\begin{assumption}\label{ass:pairwise-exchange}
Let $a^* \in \cA$ be an optimal action. Then $\max_{a \in \cA} \sum_{k=1}^K v(a, k) = \sum_{k=1}^K v(a^*, k)$.
\end{assumption}

\begin{wrapfigure}[9]{r}{4.5cm}
\vspace{-0.75cm}
\hspace{-0.25cm}
\begin{tikzpicture}[scale=0.8,font=\small]
\draw (0,0) rectangle (2,0.5);
\draw (0,0.5) rectangle (2,1);
\draw (0,1) rectangle (2,1.5);
\draw (0,1.5) rectangle (2,2);
\draw (0,2) rectangle (2,2.5);

\draw (3,0) rectangle (5,0.5);
\draw (3,0.5) rectangle (5,1);
\draw (3,1) rectangle (5,1.5);
\draw (3,1.5) rectangle (5,2);
\draw (3,2) rectangle (5,2.5);

\node at (1, 2.75) {$a$};
\node at (4, 2.75) {$a'$};

\node at (-0.5, 0.25) {$5$};
\node at (-0.5, 0.75) {$4$};
\node at (-0.5, 1.25) {$3$};
\node at (-0.5, 1.75) {$2$};
\node at (-0.5, 2.25) {$1$};
\node at (1,1.75) {$i$};
\node at (1,0.75) {$j$};
\node at (4,1.75) {$j$};
\node at (4,0.75) {$i$};
\draw (2.05,1.75) edge[latex-latex] (2.95,0.75);
\draw (2.95,1.75) edge[latex-latex] (2.05,0.75);
\end{tikzpicture}
\caption{The probability of clicking on the second position is larger in $a$ than $a'$. The pattern reverses for the fourth position.}\label{fig:exchange}
\end{wrapfigure}
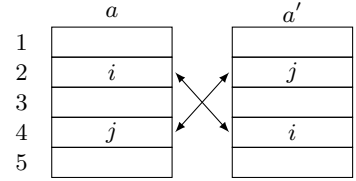
The next assumption asserts that if $a$ is an action and $i$ is more attractive than $j$, then exchanging the positions of $i$ and $j$ can only decrease the likelihood of clicking
on the item in slot $a^{-1}(i)$. \cref{fig:exchange} illustrates the two cases. The probability of clicking on the second position is larger in $a$ than in $a'$.
On the other hand, the probability of clicking on the fourth position is larger in $a'$ than in $a$.
The assumption is actually slightly stronger than this because it also specifies a lower bound on the amount by which one probability is larger 
than another in terms of the attractiveness function.

\begin{assumption}\label{ass:value-drop} 
Let $i$ and $j$ be items with $\alpha(i) \geq \alpha(j)$ and let $\sigma : \cA \to \cA$ be the permutation that exchanges $i$ and $j$ and leaves other items unchanged.
Then for any action $a \in \cA$,
\begin{align*}
v(a, a^{-1}(i)) \geq \frac{\alpha(i)}{\alpha(j)} v(\sigma \circ a, a^{-1}(i))\,.
\end{align*}
\end{assumption}

Our final assumption is that for any action $a$ with $\alpha(a(k)) = \alpha(a^*(k))$ the probability of clicking on the $k$th position is
at least as high as the probability of clicking on the $k$th position for the optimal action.
This assumption makes sense if the user is scanning the items from the first position until the last, clicking on items they find attractive
until some level of satisfaction is reached. Under this assumption the user is least likely to examine position $k$ under the optimal ranking.

\begin{assumption}
\label{ass:lowest-exam}
For any action $a$ and optimal action $a^*$ with $\alpha(a(k)) = \alpha(a^*(k))$ it holds that $v(a, k) \geq v(a^*, k)$.
\end{assumption}

%!TEX root = Paper.tex

\newcommand{\minels}{\operatorname{minelements}}
\newcommand{\alghead}[1]{\underline{\textsc{#1}} \\[0.1cm]}

\section{Algorithm}
\label{sec:Algorithm}

\begin{algorithm}[t]
  \caption{$\toprank$}
  \label{alg:top rank}
  \begin{algorithmic}[1]
    \STATE $G_1 \gets \emptyset$ and $c \gets \frac{4 \sqrt{2/\pi}}{\erf(\sqrt{2})}$
    \FOR{$t = 1, \dots, n$}
      \STATE $d \gets 0$
      \WHILE{$[L] \setminus \bigcup\nolimits_{c = 1}^{d} \cP_{tc} \neq \emptyset$}
        \STATE $d \gets d + 1$
        \STATE $\cP_{td} \gets \min\nolimits_{G_t}\left([L] \setminus \bigcup\nolimits_{c = 1}^{d - 1} \cP_{tc}\right)$
      \ENDWHILE
      \STATE Choose $A_t$ uniformly at random from $\cA(\cP_{t1},\ldots,\cP_{td})$
      \STATE Observe click indicators $C_{ti} \in \set{0, 1}$ for all $i \in [L]$
      \FORALL{$(i, j) \in [L]^2$}
        \STATE $U_{tij} \gets \begin{cases} C_{ti} - C_{tj} & \text{if } i,j \in \cP_{td} \text{ for some } d \\ 0 & \text{otherwise} \end{cases}$
        \STATE $S_{tij} \gets \sum_{s = 1}^t U_{sij}$ and $N_{tij} \gets \sum_{s = 1}^t |U_{sij}|$
      \ENDFOR
      \STATE $G_{t + 1} \gets G_t \cup \set{(j, i): S_{tij} \geq \sqrt{2N_{tij} \log\left(\frac{c}{\delta} \sqrt{N_{tij}}\right)} \text{ and } N_{tij} > 0}$ 
    \ENDFOR
  \end{algorithmic}
\end{algorithm}

Before we present our algorithm, we introduce some basic notation. 
Given a relation $G \subseteq [L]^2$ and $X \subseteq [L]$, let $\min_G(X) = \{i \in X : (i, j) \notin G \text{ for all } j \in X \}$. When $X$ is nonempty and $G$ does not have cycles, then $\min_G(X)$ is nonempty. Let $\cP_1, \dots, \cP_d$ be a \emph{partition} of $[L]$ so that $\cup_{c \leq d} \cP_c = [L]$ and $\cP_c \cap \cP_{c'} = \emptyset$ for any $c \neq c'$. We refer to each subset in the partition, $\cP_c$ for $c \leq d$, as a \emph{block}. Let $\cA(\cP_1,\ldots,\cP_d)$ be the set of actions $a$ where the items in $\cP_1$ are placed at the first $|\cP_1|$ positions, the items in $\cP_2$ are placed at the next $|\cP_2|$ positions, and so on. Specifically,
\begin{align*}
  \textstyle
  \cA(\cP_1, \dots, \cP_d) = \set{a \in \cA: \max_{i \in \cP_c} a^{-1}(i) \leq \min_{i \in \cP_{c + 1}} a^{-1}(i) \text{ for all } c \in [d - 1]}\,.
\end{align*}

Our algorithm is presented in \cref{alg:top rank}. We call it $\toprank$, because it maintains a \emph{topological order} of items in each round. 
The order is represented by relation $G_t$, where $G_1 = \emptyset$. In each round, $\toprank$ computes a partition of $[L]$ by iteratively peeling 
off minimum items according to $G_t$. Then it randomizes items in each block of the partition and maintains statistics on the relative number of clicks between pairs 
of items in the same block. A pair of items $(j, i)$ is added to the relation once item $i$ receives sufficiently more clicks than item $j$ 
during rounds where the items are in the same block. The reader should interpret $(j, i) \in G_t$ as meaning that $\toprank$ collected enough 
evidence up to round $t$ to conclude that $\alpha(j) < \alpha(i)$.

\begin{remark}
The astute reader will notice that the algorithm is not well defined if $G_t$ contains cycles.
The analysis works by proving that this occurs with low probability and the behavior of the algorithm
may be defined arbitrarily whenever a cycle is encountered. 
\cref{ass:zero} means that items in position $k > K$ are never clicked. As a consequence, the algorithm never needs to actually compute the blocks $\cP_{td}$ where $\min \cI_{td} > K$ because items in these blocks are never shown to the user.
\end{remark}

Shortly we give an illustration of the algorithm, but first introduce the notation to be used in the analysis. Let $\cI_{td}$ be the slots of the ranking where items in $\cP_{td}$ are placed,
\begin{align*}
  \cI_{td} = \left[\abs{\cup_{c \leq d} \cP_{tc}}\right] \setminus \left[\abs{\cup_{c < d} \cP_{tc}}\right]\,.
\end{align*}
Furthermore, let $D_{ti}$ be the block with item $i$, so that $i \in \cP_{tD_{ti}}$. Let $M_t = \max_{i \in [L]} D_{ti}$ be the number of blocks in the partition in round $t$.

\paragraph{Illustration}

\begin{wrapfigure}[11]{r}{6cm}
\begin{tikzpicture}[scale=0.8]
\node (1) at (0,0) {1};
\node (2) at (1,0) {2};
\node (4) at (2,0) {4};
\node (3) at (0,-1) {3};
\node (5) at (0,-2) {5};
\node (p1) at (-1.2,0) {$\cP_{t1}$};
\node[anchor=west] at (2.8,0) {$\cI_{t1} = \{1,2,3\}$};
\node[anchor=west] at (2.8,-1) {$\cI_{t2} = \{4\}$};
\node[anchor=west] at (2.8,-2) {$\cI_{t3} = \{5\}$};
\node[rounded rectangle,draw,minimum width=3cm,minimum height=0.6cm,dotted] (r1) at (1,0) {};
\node (p2) at (-1.2,-1) {$\cP_{t2}$};
\node[rounded rectangle,draw,minimum width=1.3cm,minimum height=0.6cm,dotted] (r2) at (0,-1) {};
\node (p3) at (-1.2,-2) {$\cP_{t3}$};
\node[rounded rectangle,draw,minimum width=1.3cm,minimum height=0.6cm,dotted] (r3) at (0,-2) {};
\draw[-latex] (5) -- (3);
\draw[-latex] (5) -- (2);
\draw[-latex] (3) -- (1);
\end{tikzpicture}
\caption{Illustration of partition produced by topological sort}\label{fig:alg:ill}
\end{wrapfigure}
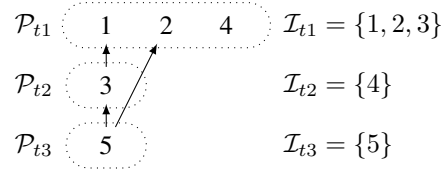
Suppose $L = 5$ and $K = 4$ and in round $t$ the relation is $G_t = \{(3,1), (5,2), (5,3)\}$. 
This indicates the algorithm has collected enough data to believe that item $3$ is less attractive than item $1$ and that item $5$ is less attractive than items $2$ and $3$.
The relation is depicted in \cref{fig:alg:ill} where an arrow from $j$ to $i$ means that $(j, i) \in G_t$.  
In round $t$ the first three positions in the ranking will contain items from $\cP_{t1} = \{1,2,4\}$, but with random order.
The fourth position will be item $3$ and item $5$ is not shown to the user. Note that $M_t = 3$ here and $D_{t2} = 1$ and $D_{t5} = 3$. 

\begin{remark}
\toprank\ is not an elimination algorithm. In the scenario described above, item $5$ is not shown to the user, but it could happen 
that later $(4, 2)$ and $(4,3)$ are added to the relation and then
\toprank\ will start randomizing between items $4$ and $5$ for the fourth position.
\end{remark}

%!TEX root = Paper.tex

\section{Regret analysis}
\label{sec:analysis}

\begin{theorem}
\label{thm:ranking:main} Let function $v$ satisfy Assumptions~\ref{ass:zero}--\ref{ass:lowest-exam} and $\alpha(1) > \alpha(2) > \cdots > \alpha(L)$. 
Let $\Delta_{ij} = \alpha(i) - \alpha(j)$ and $\delta \in (0, 1)$. Then the $n$-step regret of $\toprank$ is bounded from above as
\begin{align*}
R_n \leq \delta n K L^2 + \sum_{j=1}^L \sum_{i=1}^{\min\{K, j-1\}} \left(1 + \frac{6(\alpha(i) + \alpha(j))\log\left(\frac{c \sqrt{n}}{\delta}\right)}{\Delta_{ij}}\right)\,.
\end{align*} 
Furthermore, $\displaystyle R_n \leq \delta n K L^2 + KL + \sqrt{4K^3 L n \log\left(\frac{c\sqrt{n}}{\delta}\right)}$\,.
\end{theorem}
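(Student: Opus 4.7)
The plan is to organize the proof around a high-probability ``good event'' on which the relation $G_t$ never contains an incorrect edge, i.e., on which $(j,i) \in G_t$ always implies $\alpha(j) < \alpha(i)$. Define the failure event $F = \set{\exists t \leq n, i, j : \alpha(i) \leq \alpha(j) \text{ and } (i,j) \in G_t}$. For each ordered pair $(i,j)$ with $\alpha(i) \leq \alpha(j)$, the process $(U_{tij})_t$ restricted to rounds where $i,j$ lie in a common block is a martingale-difference sequence after subtracting a nonpositive drift: by \cref{ass:value-drop} and the fact that positions inside a block are randomized uniformly, $\EE[U_{tij} \mid \cF_{t-1}] \leq 0$ when $(j,i) \notin G_t$. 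The threshold in the algorithm $S_{tij} \geq \sqrt{2 N_{tij} \log(c \sqrt{N_{tij}}/\delta)}$ is exactly a stopped-martingale deviation level, and the constant $c = 4\sqrt{2/\pi}/\erf(\sqrt{2})$ is tuned so that a peeling/method-of-mixtures argument gives $\bbP(\exists t : S_{tij} \geq \sqrt{2 N_{tij} \log(c \sqrt{N_{tij}}/\delta)}) \leq \delta$. A union bound over the at most $L^2$ ordered pairs yields $\bbP(F) \leq \delta L^2$, contributing $\delta n K L^2$ to the regret.

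On $F^c$, the second step is to show that every block is a contiguous interval with respect to the true attractiveness order: if $\alpha(1) > \cdots > \alpha(L)$, then within each block $\cP_{td}$ the items form a consecutive range. This follows because the absence of a ``wrong'' edge means the topological peeling produces partitions compatible with the true order, so only items that are still ``tied in evidence'' share a block. The per-round regret can then be decomposed pair by pair: whenever items $i < j$ (so $\alpha(i) > \alpha(j)$) share a block, \cref{ass:value-drop,ass:lowest-exam} imply that the expected instantaneous regret contribution from this pair is at most a multiple of $\Delta_{ij}/(\alpha(i) + \alpha(j))$ times the expected sum of clicks on positions of that block. Summing over the block and applying \cref{ass:value-drop} in both directions yields an inequality of the form $\Ep[\text{per-round regret from pair }(i,j)] \leq \tfrac{\Delta_{ij}}{\alpha(i)+\alpha(j)} \, \Ep[C_{ti} + C_{tj}]$ when both items are in the same block, which telescopes after summing over rounds to a term proportional to $\Delta_{ij} \cdot \Ep[N_{nij}] / (\alpha(i)+\alpha(j))$.

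The third step bounds $\Ep[N_{nij}]$ for each relevant pair on the good event. As long as $i$ and $j$ share a block, $\Ep[U_{tij} \mid \cF_{t-1}] \geq \Delta_{ij}/(\alpha(i)+\alpha(j))$ times a positive factor of the expected click mass (again by \cref{ass:value-drop,ass:lowest-exam}). Hence $S_{tij}$ grows linearly at rate $\gtrsim \Delta_{ij} N_{tij}/(\alpha(i)+\alpha(j))$, while the threshold grows only like $\sqrt{N_{tij} \log n}$. Solving the resulting inequality $\Delta_{ij} N/(\alpha(i)+\alpha(j)) \gtrsim \sqrt{N \log(c\sqrt{n}/\delta)}$ gives $N_{nij} \leq 1 + 6(\alpha(i)+\alpha(j)) \log(c\sqrt{n}/\delta)/\Delta_{ij}^2$ on $F^c$ (the constant $6$ comes from being a little loose to absorb lower-order terms). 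Multiplying by the per-round regret factor $\Delta_{ij}$ and noting that we only need to sum over $i \in [\min\{K, j-1\}]$ (since \cref{ass:zero} makes items in position $>K$ contribute zero regret) delivers the first bound.

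The main obstacle will be the martingale step: the conditional mean of $U_{tij}$ must be controlled using \cref{ass:value-drop} after averaging over the intra-block uniform randomization, and the self-normalized deviation inequality must handle the data-dependent stopping implicit in $N_{tij}$ and in the addition of edges to $G_t$. The second, distribution-free bound then follows by an application of Cauchy--Schwarz: the sum $\sum_{j,i} \Delta_{ij}^{-1}$-style terms are re-expressed using $\sum_{j,i} (\alpha(i)+\alpha(j)) / \Delta_{ij} \cdot \Ep[N_{nij}] \Delta_{ij}$ and then bounded via $\sum_{j,i} \Ep[N_{nij}] \leq nK$ together with the fact that at most $K L$ pairs can contribute and $\alpha \leq 1$, yielding the claimed $\sqrt{4 K^3 L n \log(c\sqrt{n}/\delta)}$ term after adding the initial $KL$ to cover the ``$+1$'' contributions.
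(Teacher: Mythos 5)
Your first and third building blocks (the failure event with the self-normalized deviation bound, the union bound giving $\delta L^2$, and solving $\Delta_{ij}N/(\alpha(i)+\alpha(j)) \gtrsim \sqrt{N\log(\cdot)}$ for $N_{nij}$) match the paper's \cref{lem:ranking:conc,lem:ranking:nofail,lem:ranking:U-bound}. But the middle of your argument --- the regret decomposition --- has two genuine problems. First, your structural claim that on the good event every block is a contiguous interval of $[L]$ is false: in the paper's own illustration $G_t = \{(3,1),(5,2),(5,3)\}$ contains no incorrect edge, yet $\cP_{t1} = \{1,2,4\}$. The correct (and sufficient) statement is weaker: the \emph{most attractive} item of block $d$ satisfies $I^*_{td} = \min \cP_{td} \leq 1 + \sum_{c<d}|\cP_{tc}|$ (\cref{lem:ranking:opt}). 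This is what lets one invoke \cref{ass:value-drop,ass:lowest-exam} to show $v(A_t,k) \geq v(\sigma \circ A_t, k) \geq v(a^*,k)$ for $k = A_t^{-1}(I^*_{td})$, and hence, after averaging over the uniform placement within the block, $\Ep_{t-1}[C_{tI^*_{td}}] \geq |\cI_{td}|^{-1}\sum_{k\in\cI_{td}} v(a^*,k)$. This comparison of the optimal action's click mass to the clicks actually collected by the block leaders is the heart of the proof, and your proposal never establishes it; without it the quantity ``per-round regret from pair $(i,j)$'' is not defined relative to $a^*$ at all.

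Second, your key inequality $\Ep[\text{per-round regret from pair }(i,j)] \leq \frac{\Delta_{ij}}{\alpha(i)+\alpha(j)}\,\Ep[C_{ti}+C_{tj}]$ points the wrong way. What the assumptions actually give (this is \cref{lem:ranking:cond}) is the \emph{lower} bound $\bbP_{t-1}(C_{ti}=1) - \bbP_{t-1}(C_{tj}=1) \geq \frac{\Delta_{ij}}{\alpha(i)+\alpha(j)}\bigl(\bbP_{t-1}(C_{ti}=1)+\bbP_{t-1}(C_{tj}=1)\bigr)$, equivalent to $\bbP_{t-1}(C_{ti}=1)/\bbP_{t-1}(C_{tj}=1) \geq \alpha(i)/\alpha(j)$; \cref{ass:value-drop} provides no matching upper bound on this ratio (e.g.\ in a cascade-like model item $j$ may almost never be examined, making the click gap far exceed $\frac{\Delta_{ij}}{\alpha(i)+\alpha(j)}\Ep[C_{ti}+C_{tj}]$). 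Consequently your telescoping to $\Delta_{ij}\Ep[N_{nij}]/(\alpha(i)+\alpha(j))$ does not go through, and replacing it by the available bound $\Ep[U_{tij}] \leq \Ep[|U_{tij}|]$ would only yield a $1/\Delta_{ij}^2$-type rate. The paper avoids this by bounding the per-round regret directly by $\sum_{d}\sum_{j\in\cP_{td}}\Ep_{t-1}[U_{tI^*_{td}j}] \leq \sum_{i<j}\Ep_{t-1}[U_{tij}]$ and then capping $\one{F_n^c}S_{nij}$ itself: the algorithm separates $i$ and $j$ as soon as $S_{tij}$ exceeds the threshold, so $S_{nij} \leq 1 + \sqrt{2N_{nij}\log(c\sqrt{N_{nij}}/\delta)}$, which combined with the concentration lower bound $S_{nij} \geq \frac{\Delta_{ij}N_{nij}}{\alpha(i)+\alpha(j)} - \sqrt{2N_{nij}\log(\cdot)}$ bounds $N_{nij}$ and then $S_{nij}$. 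You should restructure the decomposition along these lines; the concentration and Cauchy--Schwarz steps you sketch for the problem-independent bound are otherwise consistent with the paper's appendix.
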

By choosing $\delta = n^{-1}$ the theorem shows that the expected regret is at most
\begin{align*}
R_n = O\left(\sum_{j=1}^L \sum_{i=1}^{\min\{K, j-1\}} \frac{\alpha(i) \log(n)}{\Delta_{ij}}\right) \qquad \text{ and } \qquad
R_n = O\left(\sqrt{K^3 L n \log(n)}\right)\,.
\end{align*}
The algorithm does not make use of any assumed ordering on $\alpha(\cdot)$, so the assumption is only used to allow for a simple expression for the regret.
The only algorithm that operates under comparably general assumptions is \batchrank for which the problem-dependent regret is a factor of $K^2$ worse and the
  dependence on the suboptimality gap is replaced by a dependence on the minimal suboptimality gap. 

The core idea of the proof is to show that \textit{(a)} if the algorithm is suffering regret as a consequence of misplacing an item, then it is 
gaining information about the relation of the items so that $G_t$ will gain elements and \textit{(b)} once $G_t$ is sufficiently rich the algorithm is playing optimally.
Let $\cF_t = \sigma(A_1, C_1,\ldots, A_t, C_t)$ and $\bbP_t(\cdot) = \bbP(\cdot \mid \cF_t)$ and $\Esub{\cdot}{t} = \E{\cdot \mid \cF_t}$.
For each $t \in [n]$ let $F_t$ to be the failure event that there exists $i \neq j \in [L]$ and $s < t$ such that $N_{sij} > 0$ and
\begin{align*}
\left|S_{sij} - \sum_{u=1}^s \EE_{u-1}\left[U_{uij} \mid U_{uij} \neq 0\right] |U_{uij}|\right| \geq \sqrt{2N_{sij} \log(c \sqrt{N_{sij}}/\delta)}\,.
\end{align*} 

\begin{lemma}\label{lem:ranking:cond}
Let $i$ and $j$ satisfy $\alpha(i) \geq \alpha(j)$ and $d\ge 1$. 
On the event that $i, j \in \cP_{sd}$ and $d \in [M_s]$ and $U_{sij} \neq 0$, the following hold almost surely:
\begin{align*}
\text{\textit{(a)}}\,\,\,
\EE_{s-1}[U_{sij} \mid U_{sij} \neq 0] \geq \frac{\Delta_{ij}}{\alpha(i) + \alpha(j)}
\qquad \qquad
\text{\textit{(b)}}\,\,\, \EE_{s-1}[U_{sji} \mid U_{sji} \neq 0] \leq 0\,.
\end{align*}
\end{lemma}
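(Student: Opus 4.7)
The plan is to reduce part (a) to a purely algebraic claim about the marginal click probabilities, then prove that claim by a symmetric pairing argument that exploits the uniform randomization of \toprank\ within a block together with Assumption~\ref{ass:value-drop}. Part (b) will then follow for free.

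\textbf{Reduction.} Throughout, fix a realisation on which $i,j\in\cP_{sd}$ for some $d\in[M_s]$; this event is $\cF_{s-1}$-measurable because the partition is a deterministic function of $G_s\in\cF_{s-1}$. Since $U_{sij}\in\{-1,0,1\}$,
\[
\EE_{s-1}[U_{sij}\mid U_{sij}\neq 0]
=\frac{\EE_{s-1}[C_{si}-C_{sj}]}{\bbP_{s-1}(U_{sij}\neq 0)},
\]
and because $|C_{si}-C_{sj}|\le C_{si}+C_{sj}$, the denominator satisfies $\bbP_{s-1}(U_{sij}\neq 0)=\EE_{s-1}|C_{si}-C_{sj}|\le \EE_{s-1}[C_{si}]+\EE_{s-1}[C_{sj}]$. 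So part (a) reduces to
\[
\frac{\EE_{s-1}[C_{si}]-\EE_{s-1}[C_{sj}]}{\EE_{s-1}[C_{si}]+\EE_{s-1}[C_{sj}]}\ \ge\ \frac{\alpha(i)-\alpha(j)}{\alpha(i)+\alpha(j)},
\]
which, after cross-multiplying, is equivalent to the marginal inequality
\[
\alpha(j)\,\EE_{s-1}[C_{si}]\ \ge\ \alpha(i)\,\EE_{s-1}[C_{sj}]. \tag{$\star$}
\]

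\textbf{Pairing argument.} Let $\cB=\cA(\cP_{s1},\dots,\cP_{sM_s})$, so $A_s$ is uniform on $\cB$ and $\EE_{s-1}[C_{si}]=|\cB|^{-1}\sum_{a\in\cB}v(a,a^{-1}(i))$ (and similarly for $j$). Let $\sigma$ denote the transposition of $i$ and $j$. Because $i$ and $j$ lie in the same block of the partition, the map $a\mapsto \sigma\circ a$ is an involution on $\cB$. Apply Assumption~\ref{ass:value-drop} at $a$ to obtain
\[
\alpha(j)\,v(a,a^{-1}(i))\ \ge\ \alpha(i)\,v(\sigma\circ a,a^{-1}(i)),
\]
and apply it again at $\sigma\circ a$ (using that $(\sigma\circ a)^{-1}(i)=a^{-1}(j)$ and $\sigma\circ\sigma\circ a=a$) to get
\[
\alpha(j)\,v(\sigma\circ a,a^{-1}(j))\ \ge\ \alpha(i)\,v(a,a^{-1}(j)).
\]
Summing each inequality over $a\in\cB$ and relabelling $a\leftrightarrow \sigma\circ a$ on the left-hand sides gives $2\alpha(j)\sum_a v(a,a^{-1}(i)) \ge 2\alpha(i)\sum_a v(a,a^{-1}(j))$, which is exactly $(\star)$. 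Dividing by $|\cB|$ completes the proof of part (a).

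\textbf{Part (b).} Since $U_{sji}=-U_{sij}$, the event $\{U_{sji}\neq 0\}$ coincides with $\{U_{sij}\neq 0\}$ and
\[
\EE_{s-1}[U_{sji}\mid U_{sji}\neq 0] = -\EE_{s-1}[U_{sij}\mid U_{sij}\neq 0] \le -\frac{\Delta_{ij}}{\alpha(i)+\alpha(j)} \le 0
\]
by part (a) and $\alpha(i)\ge\alpha(j)$.

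\textbf{Main obstacle.} The only non-routine step is identifying the correct reduction $(\star)$: the denominator bound $\bbP_{s-1}(U_{sij}\neq 0)\le\EE_{s-1}[C_{si}+C_{sj}]$ must be loose enough for the $\alpha(i)+\alpha(j)$ factor to appear naturally, yet tight enough that Assumption~\ref{ass:value-drop}---applied symmetrically at both $a$ and $\sigma\circ a$---exactly closes the gap term by term. Everything else (the involution, the cancellation of the $1/|\cB|$ factors) is bookkeeping.
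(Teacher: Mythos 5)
Your proof is correct and essentially identical to the paper's: the same reduction of the conditional expectation to a ratio of marginal click probabilities with the denominator enlarged to $\EE_{s-1}[C_{si}]+\EE_{s-1}[C_{sj}]$, and the same symmetrization over the uniform within-block randomization combined with \cref{ass:value-drop}, which the paper states as $\EE_{s-1}[v(A_s,A_s^{-1}(i))]\geq\tfrac{\alpha(i)}{\alpha(j)}\EE_{s-1}[v(A_s,A_s^{-1}(j))]$ --- precisely your inequality $(\star)$. The only differences are cosmetic: you cross-multiply to isolate $(\star)$ up front where the paper instead finishes with the monotonicity of $x\mapsto 1-2/(1+x)$ (both arguments, yours and the paper's, implicitly use that the numerator is nonnegative when enlarging the denominator), and part (b) is handled identically via $U_{sji}=-U_{sij}$.
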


\begin{proof}
For the remainder of the proof we focus on the event that $i, j \in \cP_{sd}$ and $d \in [M_s]$ and $U_{sij} \neq 0$. We also discard the measure zero subset of this event
where $\bbP_{s-1}(U_{sij} \neq 0) = 0$. 
From now on we omit the `almost surely' qualification on conditional expectations.
Under these circumstances the definition of conditional expectation shows that
\begin{align}
&\EE_{s-1}[U_{sij} \mid U_{sij} \neq 0] 
= \frac{\bbP_{s-1}(C_{si} = 1, C_{sj} = 0) - \bbP_{s-1}(C_{si} = 0, C_{sj} = 1)}{\bbP_{s-1}(C_{si} \neq C_{sj})} \nonumber \\
&\qquad= \frac{\bbP_{s-1}(C_{si} = 1) - \bbP_{s-1}(C_{sj} = 1)}{\bbP_{s-1}(C_{si} \neq C_{sj})} 
\geq \frac{\bbP_{s-1}(C_{si} = 1) - \bbP_{s-1}(C_{sj} = 1)}{\bbP_{s-1}(C_{si} = 1) + \bbP_{s-1}(C_{sj} = 1)} \nonumber \\
&\qquad= \frac{\EE_{s-1}[v(A_s, A_s^{-1}(i)) - v(A_s, A_s^{-1}(j))]}{\EE_{s-1}[v(A_s, A_s^{-1}(i)) + v(A_s, A_s^{-1}(j))]}\,, \label{eq:ranking:cond}
\end{align}
where in the second equality we added and subtracted $\bbP_{s-1}(C_{si} = 1, C_{sj} = 1)$.
By the design of $\toprank$, the items in $\cP_{td}$ are placed into slots $\cI_{td}$ uniformly at random.
Let $\sigma$ be the permutation that exchanges the positions of items $i$ and $j$. Then using Assumption~\ref{ass:value-drop},
\begin{align*}
&\EE_{s-1}[v(A_s, A_s^{-1}(i))] 
= \sum_{a \in \cA} \bbP_{s-1}(A_s = a) v(a, a^{-1}(i)) 
\geq \frac{\alpha(i)}{\alpha(j)} \sum_{a \in \cA} \bbP_{s-1}(A_s = a) v(\sigma \circ a, a^{-1}(i)) \\
&\quad= \frac{\alpha(i)}{\alpha(j)} \sum_{a \in \cA} \bbP_{s-1}(A_s = \sigma \circ a) v(\sigma \circ a, (\sigma \circ a)^{-1}(j)) 
= \frac{\alpha(i)}{\alpha(j)} \EE_{s-1}[v(A_s, A_s^{-1}(j))]\,,
\end{align*}
where the second equality follows from the fact that $a^{-1}(i) = (\sigma \circ a)^{-1}(j)$ and the definition of the algorithm ensuring that
$\bbP_{s-1}(A_s = a) = \bbP_{s-1}(A_s = \sigma \circ a)$. The last equality follows from the fact that $\sigma$ is a bijection.
Using this and continuing the calculation in \cref{eq:ranking:cond} shows that
\begin{align*}
\frac{\Ep_{s-1}\!\left[v(A_s, A_s^{-1}(i)) - v(A_s, A_s^{-1}(j))\right]}{\Ep_{s-1}\!\left[v(A_s, A_s^{-1}(i)) + v(A_s, A_s^{-1}(j))\right]}
&= 1 - \frac{2}{1 + \Ep_{s-1}\!\left[v(A_s, A_s^{-1}(i))\right] / \Ep_{s-1}\!\left[v(A_s, A_s^{-1}(j))\right]} \\
&\geq 1 - \frac{2}{1 + \alpha(i) / \alpha(j)} 
= \frac{\alpha(i) - \alpha(j)}{\alpha(i) + \alpha(j)}
= \frac{\Delta_{ij}}{\alpha(i) + \alpha(j)}\,.
\end{align*}
The second part follows from the first since $U_{sji} = -U_{sij}$.
\end{proof}

The next lemma shows that the failure event occurs with low probability.

\begin{lemma}\label{lem:ranking:nofail}
It holds that $\bbP(F_n) \leq \delta L^2$.
\end{lemma}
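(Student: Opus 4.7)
The plan is to treat each ordered pair $(i,j)$ separately via a time-uniform self-normalized concentration inequality and then union bound over the at most $L^2$ such pairs. Fix $(i,j)$ with $i \neq j$ and set $\mu_u = \EE_{u-1}[U_{uij} \mid U_{uij} \neq 0]$ (with $\mu_u = 0$ on the null event where $\bbP_{u-1}(U_{uij} \neq 0) = 0$). Let $M_t = S_{tij} - \sum_{u=1}^t \mu_u |U_{uij}|$. This is an $(\cF_t)$-martingale: because $U_{uij} \in \{-1,0,1\}$, we have $\EE_{u-1}[U_{uij}] = \mu_u \bbP_{u-1}(U_{uij} \neq 0) = \mu_u \EE_{u-1}[|U_{uij}|]$, so the increment $U_{uij} - \mu_u |U_{uij}|$ has zero conditional mean.

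The central structural observation is that on $\{|U_{uij}|=1\}$ the increment $\Delta M_u = \sgn(U_{uij}) - \mu_u$ has mean zero and is supported on a two-point set of width $2$, so Hoeffding's lemma gives $\EE_{u-1}[e^{\lambda \Delta M_u} \mid |U_{uij}|=1] \leq e^{\lambda^2/2}$, while $\Delta M_u = 0$ when $|U_{uij}|=0$. Combining these yields $\EE_{u-1}[\exp(\lambda \Delta M_u - \tfrac{1}{2}\lambda^2 |U_{uij}|)] \leq 1$, so $L_t(\lambda) := \exp(\lambda M_t - \tfrac{1}{2}\lambda^2 N_{tij})$ is a nonnegative supermartingale with $L_0(\lambda)=1$. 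To extract a bound self-normalized by the observed $N_{tij}$ rather than its conditional expectation, I would invoke the method of mixtures: integrate $L_t(\lambda)$ against a symmetric probability density $h(\lambda)$ on $\mathbb{R}$ to obtain a mixture supermartingale $\tilde L_t = \int L_t(\lambda)\, h(\lambda)\, d\lambda$ with $\tilde L_0 = 1$, then apply Ville's maximal inequality to get $\bbP(\sup_t \tilde L_t \geq 1/\delta) \leq \delta$. The Gaussian integral in $\lambda$ evaluates in closed form, and rearranging $\tilde L_t < 1/\delta$ produces the two-sided bound
\[
|M_t| \leq \sqrt{2 N_{tij} \log\!\big(c \sqrt{N_{tij}}/\delta\big)}
\]
for all $t$ with $N_{tij} > 0$, where the constant $c = 4\sqrt{2/\pi}/\erf(\sqrt{2})$ reflects the normalization of an appropriately truncated Gaussian mixture density (the appearance of $\erf(\sqrt{2})$ in $c$ is the giveaway).

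The main obstacle is this last bookkeeping step: matching the precise constant $c$ requires choosing the truncated mixture density carefully and evaluating the resulting incomplete Gaussian integral, rather than settling for the looser constant one would get from a plain Gaussian mixture. With the single-pair bound in hand, the lemma follows from a union bound over the at most $L^2$ ordered pairs $(i,j)$ with $i \neq j$, each contributing probability at most $\delta$, so $\bbP(F_n) \leq L^2 \delta$.
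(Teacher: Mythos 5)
Your proposal is correct and follows essentially the same route as the paper: the failure event for each ordered pair is controlled by exactly the self-normalized, method-of-mixtures supermartingale bound that the paper isolates as Lemma~\ref{lem:ranking:conc} (applied with $X_t = U_{tij}$), and the union bound over the at most $L^2$ ordered pairs finishes the argument. The only cosmetic difference is in the mixture: the paper integrates $\exp(\lambda S_t - \lambda^2 N_t/2)$ against the uniform density on $[0,2]$ (the integrand, not the mixing density, is the Gaussian), which is where the $\erf(\sqrt{2})$ in $c$ comes from.
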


\begin{proof}
The proof follows immediately from \cref{lem:ranking:cond}, the definition of $F_n$, the union bound over all pairs of actions, and a modification of the Azuma-Hoeffding inequality in \cref{lem:ranking:conc}.
\end{proof}

\begin{lemma}\label{lem:order}
On the event $F_t^c$ it holds that $(i, j) \notin G_t$ for all $i < j$.
\end{lemma}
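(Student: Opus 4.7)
The plan is to prove the contrapositive: whenever $(i,j) \in G_t$ for some $i < j$ the failure event $F_t$ must have occurred. Since the theorem assumes $\alpha(1) > \alpha(2) > \cdots > \alpha(L)$, the condition $i<j$ is just $\alpha(i) > \alpha(j)$, so the lemma says that \toprank\ cannot produce ``wrong-direction'' edges without a failure. Unpacking the update rule in \cref{alg:top rank}, which adds $(j',i')$ precisely when $S_{si'j'} \geq \sqrt{2N_{si'j'}\log(c\sqrt{N_{si'j'}}/\delta)}$ with $N_{si'j'}>0$, the presence of $(i,j)$ in $G_t$ means that at some round $s < t$ we have $N_{sji}>0$ and
\begin{align*}
S_{sji} \;\geq\; \sqrt{2 N_{sji} \log\!\left(c \sqrt{N_{sji}}/\delta\right)}\,.
\end{align*}

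The key step is to control the drift of $S_{sji}$ via \cref{lem:ranking:cond}(b). The goal is to show that almost surely
\begin{align*}
\sum_{u=1}^{s} \EE_{u-1}\!\left[U_{uji}\mid U_{uji}\neq 0\right]|U_{uji}| \;\leq\; 0\,.
\end{align*}
I handle each summand pointwise. If $U_{uji}=0$ the summand vanishes, which neutralizes any ambiguity of the conditional expectation on the null event $\{\bbP_{u-1}(U_{uji}\neq 0)=0\}$. If $U_{uji}\neq 0$ then by the definition of $U_{uji}$ the items $i$ and $j$ must lie in a common block $\cP_{uD_{ui}}$, and the fact that $C_{ui}+C_{uj}\geq 1$ combined with \cref{ass:zero} forces both to occupy slots in $[K]$, so $D_{ui}\in[M_u]$. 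These are exactly the hypotheses of \cref{lem:ranking:cond}(b) with the ordering $\alpha(i)\geq\alpha(j)$, which yields $\EE_{u-1}[U_{uji}\mid U_{uji}\neq 0]\leq 0$ almost surely, and summing gives the displayed inequality.

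Finally, on $F_t^c$ the defining concentration inequality applied to the ordered pair $(j,i)$ at time $s$ gives
\begin{align*}
S_{sji} \;<\; \sum_{u=1}^{s} \EE_{u-1}\!\left[U_{uji}\mid U_{uji}\neq 0\right]|U_{uji}| + \sqrt{2 N_{sji} \log\!\left(c\sqrt{N_{sji}}/\delta\right)} \;\leq\; \sqrt{2 N_{sji} \log\!\left(c\sqrt{N_{sji}}/\delta\right)}\,,
\end{align*}
contradicting the lower bound extracted from $(i,j)\in G_t$. So no such pair can appear on $F_t^c$. The main obstacle I anticipate is the bookkeeping around the two corner cases in the drift calculation, namely that $|U_{uji}|$ absorbs the ill-defined conditional expectation when $\bbP_{u-1}(U_{uji}\neq 0)=0$, and that \cref{ass:zero} is genuinely needed to certify $D_{ui}\in[M_u]$ whenever a click occurs; once these are in place the rest is a one-line algebraic contradiction.
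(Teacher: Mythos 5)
Your proof is correct and follows essentially the same route as the paper's: use \cref{lem:ranking:cond}(b) to show the drift term $\sum_{u\leq s}\EE_{u-1}[U_{uji}\mid U_{uji}\neq 0]|U_{uji}|$ is nonpositive (with the $|U_{uji}|$ factor absorbing rounds where $i,j$ are in different blocks), then invoke the definition of $F_t^c$ to conclude $S_{sji}$ stays below the threshold that would add $(i,j)$ to $G_t$. Casting it as a contrapositive rather than a direct argument is a cosmetic difference only.
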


\begin{proof}
Let $i < j$ so that $\alpha(i) \geq \alpha(j)$. On the event $F_t^c$ either $N_{sji} = 0$ or
\begin{align*}
  S_{sji} - \sum_{u = 1}^s \EE_{u - 1}[U_{uji} \mid U_{uji} \neq 0] |U_{uji}| < \sqrt{2N_{sji} \log\left(\frac{c}{\delta} \sqrt{N_{sji}}\right)} \qquad \text{for all } s < t\,.
\end{align*}
When $i$ and $j$ are in different blocks in round $u < t$, then $U_{uji} = 0$ by definition. On the other hand, when $i$ and $j$ 
are in the same block, $\EE_{u-1}[U_{uji} \mid U_{uji} \neq 0] \leq 0$ almost surely by \cref{lem:ranking:cond}. Based on these observations,
\begin{align*}
S_{sji} < \sqrt{2N_{sji} \log\left(\frac{c}{\delta} \sqrt{N_{sji}}\right)} \qquad \text{for all } s < t\,,
\end{align*}
which by the design of $\toprank$ implies that $(i, j) \notin G_t$.
\end{proof}

\begin{lemma}\label{lem:ranking:opt}
Let $I^*_{td} = \min \cP_{td}$ be the most attractive item in $\cP_{td}$. Then on event $F_t^c$, it holds that $I^*_{td} \leq 1 + \sum_{c < d} |\cP_{td}|$ for all $d \in [M_t]$.
\end{lemma}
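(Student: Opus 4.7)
The plan is to combine the algorithm's topological-sort construction with Lemma \ref{lem:order}, which says that on $F_t^c$ we have $(i, j) \notin G_t$ whenever $i < j$. Because items are indexed so that more attractive items carry smaller indices, this means every edge of $G_t$ points from a less attractive item to a more attractive one. By the algorithm, $\cP_{td} = \min_{G_t}([L] \setminus S)$ where $S = \bigcup_{c < d}\cP_{tc}$, so $|S| = \sum_{c < d}|\cP_{tc}|$, and it suffices to exhibit an index $i^* \leq |S| + 1$ that lies in $\min_{G_t}([L]\setminus S)$.

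The candidate I would pick is $i^* := \min([L]\setminus S)$. By the definition of $i^*$, every index strictly below it lies in $S$, i.e.\ $\{1, \ldots, i^* - 1\} \subseteq S$, which immediately gives $i^* \leq |S|+1 = 1 + \sum_{c<d}|\cP_{tc}|$. It remains to verify $i^* \in \min_{G_t}([L]\setminus S)$: if not, there would exist $j \in [L]\setminus S$ with $(i^*, j) \in G_t$, but Lemma \ref{lem:order} then forces $j < i^*$, contradicting the choice of $i^*$ as the smallest element of $[L]\setminus S$. Hence $i^* \in \cP_{td}$ and $I^*_{td} = \min\cP_{td} \leq i^* \leq 1 + \sum_{c<d}|\cP_{tc}|$.

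There is no real obstacle here: the probabilistic content has already been absorbed into Lemmas \ref{lem:ranking:cond}, \ref{lem:ranking:nofail}, and \ref{lem:order}, leaving only a short combinatorial contradiction. The one minor sanity check to do in passing is that $[L]\setminus S$ is nonempty so that $i^*$ is well defined, but this is automatic from $d \in [M_t]$, since otherwise the while-loop in \cref{alg:top rank} would have terminated before producing the block $\cP_{td}$.
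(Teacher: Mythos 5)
Your proof is correct and follows essentially the same route as the paper's: both pick the candidate $i^* = \min\bigl([L]\setminus\bigcup_{c<d}\cP_{tc}\bigr)$, bound it by $1+\sum_{c<d}|\cP_{tc}|$ via a counting argument, and invoke \cref{lem:order} to relate $G_t$ to the index order. The only difference is cosmetic: the paper splits into two cases and, when $i^*$ lands in a later block, builds a decreasing chain of items back to $\cP_{td}$, whereas you observe directly that on $F_t^c$ the minimal remaining index has no outgoing $G_t$-edge into the remaining set and hence must already lie in $\cP_{td}$, which makes the paper's second case vacuous.
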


\begin{proof}
Let $i^\ast = \min \cup_{c \geq d} \cP_{tc}$. Then $i^\ast \leq 1 + \sum_{c < d} |\cP_{td}|$ holds trivially for any $\cP_{t1}, \dots, \cP_{tM_t}$ and $d \in [M_t]$. Now consider two cases. Suppose that $i^\ast \in \cP_{td}$. Then it must be true that $i^\ast = I^*_{td}$ and our claim holds. On other hand, suppose that $i^\ast \in \cP_{tc}$ for some $c > d$. Then by \cref{lem:order} and the design of the partition, there must exist a sequence of items $i_d, \dots, i_c$ in blocks $\cP_{td}, \dots, \cP_{tc}$ such that $i_d < \dots < i_c = i^\ast$. From the definition of $I^*_{td}$, $I^*_{td} \leq i_d < i^\ast$. This concludes our proof.
%Suppose that $1 < d \leq M_t$. By the definition of $\cP_{td}$ there exists an $i \in \cP_{t,d-1}$ such that $(I^*_{td}, i) \in G_t$.
%By definition $i \geq I^*_{t,d-1}$ and by \cref{lem:order} we have $I^*_{td} > i$ on $F_t^c$. Hence $I^*_{td} > I^*_{t,d-1}$
%and the sequence $I^*_{t1},\ldots,I^*_{tM_t}$ is monotone increasing, which means that for any $d \in [M_t]$ we have the equality
%$I^*_{td} = \min \bigcup_{c\geq d} \cP_{tc}$. And yet $|\bigcup_{c < d} \cP_{tc}| = \sum_{c < d} |\cP_{tc}|$ and so $I^*_{td} \leq 1 + \sum_{c<d} |\cP_{tc}|$ by the
%pigeonhole principle.
\end{proof}

\begin{lemma}\label{lem:ranking:U-bound}
On the event $F_n^c$ and for all $i < j$ it holds that
$\displaystyle S_{nij} \leq 1 + \frac{6(\alpha(i) + \alpha(j))}{\Delta_{ij}} \log\left(\frac{c \sqrt{n}}{\delta}\right)$.
\end{lemma}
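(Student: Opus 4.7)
The plan is to bound $S_{nij}$ by sandwiching it between an upper bound derived from the algorithm's stopping rule and a lower bound derived from the drift of $U$ combined with the concentration guaranteed on $F_n^c$.

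First I would set up the stopping time. Let $\tau = \min\{t \leq n : (j,i) \in G_{t+1}\}$ (and $\tau = n$ if the relation never triggers before round $n$). By the definition of $\min_{G_t}$ and the topological construction of the partition, once $(j,i) \in G_t$ the items $i$ and $j$ must lie in distinct blocks, so $U_{sij} = 0$ for all $s > \tau$. Hence $S_{nij} = S_{\tau ij}$ and $N_{nij} = N_{\tau ij}$. The definition of $\tau$ gives $S_{\tau-1,ij} < \sqrt{2 N_{\tau-1,ij}\log(c\sqrt{N_{\tau-1,ij}}/\delta)}$, and because $U_{\tau ij} \in \{-1,0,1\}$ one has $S_{\tau ij} \leq S_{\tau-1,ij} + 1$. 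Monotonicity of $N$ and $x \mapsto \sqrt{x \log x}$ then yields
\begin{equation*}
  S_{nij} \;\leq\; 1 + \sqrt{2 N_{nij}\log\!\left(\tfrac{c\sqrt{n}}{\delta}\right)}\,.
\end{equation*}

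Next I would use $F_n^c$ to get a lower bound on $S_{nij}$ in terms of $N_{nij}$. On $F_n^c$ the defining inequality gives
\begin{equation*}
  S_{nij} \;\geq\; \sum_{u=1}^{n} \EE_{u-1}\!\left[U_{uij} \mid U_{uij}\neq 0\right]\,|U_{uij}| \;-\; \sqrt{2N_{nij}\log\!\left(\tfrac{c\sqrt{n}}{\delta}\right)}.
\end{equation*}
Every term in the sum with $|U_{uij}| = 1$ satisfies the hypothesis of \cref{lem:ranking:cond} (since $U_{uij}\neq 0$ forces $i,j$ to lie in a common block $\cP_{uD_{ui}}$ with $D_{ui} \in [M_u]$) and $\alpha(i) \geq \alpha(j)$ as $i<j$, so part (a) of the lemma lower bounds each such term by $\Delta_{ij}/(\alpha(i)+\alpha(j))$. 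Summing gives $\sum_u \EE_{u-1}[U_{uij}\mid U_{uij}\neq 0]\,|U_{uij}| \geq \frac{\Delta_{ij}}{\alpha(i)+\alpha(j)}\,N_{nij}$.

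Combining the two bounds eliminates $S_{nij}$ and yields the quadratic inequality
\begin{equation*}
  \frac{\Delta_{ij}}{\alpha(i)+\alpha(j)}\,N_{nij} \;\leq\; 1 + 2\sqrt{2 N_{nij}\,\log(c\sqrt{n}/\delta)}\,.
\end{equation*}
Setting $a = \Delta_{ij}/(\alpha(i)+\alpha(j))$ and $L = \log(c\sqrt{n}/\delta)$ and solving as a quadratic in $\sqrt{N_{nij}}$ gives $\sqrt{N_{nij}} \leq (\sqrt{2L}+\sqrt{2L+a})/a$. A single pass of $\sqrt{p q}\leq (p+q)/2$ turns this into $N_{nij} \leq (8L+2a)/a^2$, and feeding this back into the upper bound on $S_{nij}$ together with the one-term Taylor estimate $\sqrt{4L^2+aL}\leq 2L + a/4$ gives $S_{nij} \leq 3/2 + 4L/a$. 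Since $a \leq 1$ and $L = \log(c\sqrt{n}/\delta) \geq \log c > 1/4$, the slack $2L/a \geq 1/2$ absorbs the extra $1/2$ and produces $S_{nij} \leq 1 + 6L/a$, which is the claimed bound.

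The main obstacle is tracking the constant $6$ through the quadratic solve: the naive bound $\sqrt{A+B}\leq \sqrt{A}+\sqrt{B}$ loses too much, so I expect to need either the first-order expansion $\sqrt{4L^2+aL}\leq 2L + a/4$ above or an AM-GM split of $2\sqrt{2L}\cdot\sqrt{N_{nij}}$ into $\epsilon a N_{nij} + 2L/(a\epsilon)$ with $\epsilon = 1/2$, followed by invoking $L \geq \log c$ to swallow the residual additive constants. Everything else is straightforward bookkeeping about when $U_{sij}$ can be nonzero.
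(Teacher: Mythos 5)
Your proposal is correct and follows essentially the same route as the paper's proof: sandwich $S_{nij}$ between the algorithmic threshold (which caps $S_{nij}$ at $1+\sqrt{2N_{nij}\log(c\sqrt{N_{nij}}/\delta)}$ once $(j,i)$ enters $G_t$ and the items separate into different blocks) and the drift-minus-concentration lower bound from \cref{lem:ranking:cond}(a) on $F_n^c$, then solve for $N_{nij}$ and substitute back. The only difference is bookkeeping: you solve the quadratic in $\sqrt{N_{nij}}$ explicitly and absorb the residual $1/2$ using $\log c > 1/4$, whereas the paper linearizes via $1+\sqrt{2N\ell}\leq(1+\sqrt{2})\sqrt{N\ell}$ to get $N_{nij}\leq(1+2\sqrt{2})^2(\alpha(i)+\alpha(j))^2\Delta_{ij}^{-2}\log(c\sqrt{n}/\delta)$; both land within the stated constant $6$.
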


\begin{proof}
The result is trivial when $N_{nij} = 0$. Assume from now on that $N_{nij} > 0$.  
By the definition of the algorithm arms $i$ and $j$ are not in the same block once $S_{tij}$ grows too large relative to $N_{tij}$, which means that
\begin{align*}
S_{nij} \leq 1 + \sqrt{2 N_{nij} \log\left(\frac{c}{\delta} \sqrt{N_{nij}}\right)}\,.
\end{align*}
On the event $F_n^c$ and part (a) of \cref{lem:ranking:cond} it also follows that
\begin{align*}
S_{nij} \geq \frac{\Delta_{ij} N_{nij}}{\alpha(i) + \alpha(j)} - \sqrt{2N_{nij} \log\left(\frac{c}{\delta} \sqrt{N_{nij}}\right)}\,.
\end{align*}
Combining the previous two displays shows that
\begin{align}
\frac{\Delta_{ij} N_{nij}}{\alpha(i) + \alpha(j)} - \sqrt{2N_{nij} \log\left(\frac{c}{\delta}\sqrt{N_{nij}}\right)} 
&\leq S_{nij}
\leq 1 + \sqrt{2N_{n i j} \log\left(\frac{c}{\delta}\sqrt{N_{nij}}\right)} \nonumber \\ 
&\leq (1 + \sqrt{2})\sqrt{N_{n i j} \log\left(\frac{c}{\delta} \sqrt{N_{nij}}\right)}\,.
\label{eq:ranking-2}
\end{align}
Using the fact that $N_{nij} \leq n$ and rearranging the terms in the previous display shows that
\begin{align*}
N_{n i j} \leq \frac{(1 + 2\sqrt{2})^2 (\alpha(i) + \alpha(j))^2}{\Delta_{ij}^2} \log\left(\frac{c \sqrt{n}}{\delta}\right)\,.
\end{align*}
The result is completed by substituting this into \cref{eq:ranking-2}.
\end{proof}

\begin{proof}[Proof of \cref{thm:ranking:main}]
The first step in the proof is an upper bound on the expected number of clicks in the optimal list $a^*$. Fix time $t$, block $\cP_{td}$, and recall that $I^*_{td} = \min \cP_{td}$ 
is the most attractive item in $\cP_{td}$. 
%The key observation is that the expected number of clicks on $I^*_{td}$ is not lower than that of any optimal item at positions $\cI_{td}$, when both $I^*_{td}$ and 
%that optimal item are at the same position. 
%We prove this claim as follows. 
Let $k = A_t^{-1}(I^*_{td})$ be the position of item $I^*_{td}$ and $\sigma$ 
be the permutation that exchanges items $k$ and $I^*_{td}$. By \cref{lem:ranking:opt}, $I_{td}^* \leq k$; and then from Assumptions~\ref{ass:value-drop} and \ref{ass:lowest-exam}, 
we have that $v(A_t, k) \geq v(\sigma \circ A_t, k) \geq v(a^*, k)$. Based on this result, the expected number of clicks on $I^*_{td}$ is bounded from below by those on items in $a^*$,
\begin{align*}
\Ep_{t-1}\left[C_{tI_{td}^*}\right]
&= \sum_{k \in \cI_{td}} \bbP_{t-1}(A_t^{-1}(I_{td}^*) = k) \Ep_{t-1}[v(A_t, k) \mid A_t^{-1}(I_{td}^*) = k] \\
&= \frac{1}{|\cI_{td}|} \sum_{k \in \cI_{td}} \Ep_{t-1}[v(A_t, k) \mid A_t^{-1}(I_{td}^*) = k] 
\geq \frac{1}{|\cI_{td}|} \sum_{k \in \cI_{td}} v(a^*, k)\,,
\end{align*}
where we also used the fact that $\toprank$ randomizes within each block to guarantee that $\bbP_{t-1}(A_t^{-1}(I_{td}^*) = k) = 1/|\cI_{td}|$ for any $k \in \cI_{td}$.
Using this and the design of $\toprank$,
\begin{align*}
\sum_{k=1}^K v(a^*, k) 
= \sum_{d=1}^{M_t} \sum_{k \in \cI_{td}} v(a^*, k)
\leq \sum_{d=1}^{M_t} |\cI_{td}| \Ep_{t-1}\left[C_{tI_{td}^*}\right]\,.
\end{align*}
Therefore, under event $F^c_t$, the conditional expected regret in round $t$ is bounded by
\begin{align}
&\sum_{k=1}^K v(a^*, k)  - \EE_{t-1}\left[\sum_{j=1}^L C_{tj}\right] 
\leq \EE_{t-1}\left[\sum_{d=1}^{M_t} |\cP_{td}| C_{tI^*_{td}} - \sum_{j=1}^L C_{tj}\right] \nonumber \\
&= \EE_{t-1}\left[\sum_{d=1}^{M_t} \sum_{j \in \cP_{td}} (C_{tI^*_{td}} - C_{tj})\right] 
= \sum_{d=1}^{M_t} \sum_{j \in \cP_{td}} \EE_{t-1}[U_{tI^*_{td}j}]
\leq \sum_{j=1}^L \sum_{i=1}^{\min\{K, j-1\}} \EE_{t-1}\left[U_{tij}\right]\,. \label{eq:ranking:proof-final}
\end{align}
The last inequality follows by noting that $\EE_{t-1}[U_{tI^*_{td}j}] \leq \sum_{i=1}^{\min\{K, j-1\}} \EE_{t-1}[U_{tij}]$.
To see this use part (a) of \cref{lem:ranking:cond} to show that $\EE_{t-1}[U_{tij}] \geq 0$ for $i < j$ and
\cref{lem:ranking:opt} to show that when $I^*_{td} > K$, then neither $I^*_{td}$ nor $j$ are not shown to the user in round $t$ so that $U_{tI^*_{td}j} = 0$.
Substituting the bound in \cref{eq:ranking:proof-final} into the regret leads to
\begin{align}
R_n \leq n K \bbP(F_n) + \sum_{j=1}^L \sum_{i=1}^{\min\{K,j-1\}} \EE\left[\one{F_n^c} S_{nij}\right]\,,
\label{eq:ranking:regret}
\end{align}
where we used the fact that the maximum number of clicks over $n$ rounds is $nK$.
The proof of the first part is completed by using \cref{lem:ranking:nofail} to bound the first term and \cref{lem:ranking:U-bound} to bound the second.
The problem independent bound follows from \cref{eq:ranking:regret} and by stopping early in the proof of \cref{lem:ranking:U-bound}. The details
are given in \ifsup the appendix. \else the supplementary material. \fi
\end{proof}

\begin{lemma}\label{lem:ranking:conc}
Let $(\cF_t)_{t=0}^n$ be a filtration and $X_1,X_2,\ldots,X_n$ be a sequence of $\cF_t$-adapted random variables with $X_t \in \{-1,0,1\}$ and $\mu_t = \EE[X_t \mid \cF_{t-1}, X_t \neq 0]$. 
Then with $S_t = \sum_{s=1}^t (X_s - \mu_s |X_s|)$ and $N_t = \sum_{s=1}^t |X_s|$,
\begin{align*}
\bbP\left(\text{exists } t \leq n : |S_t| \geq \sqrt{2 N_t \log\left(\frac{c \sqrt{N_t}}{\delta}\right)} \text{ and } N_t > 0\right) \leq \delta \,,
\quad \text{where } c = \frac{4 \sqrt{2/\pi}}{\erf(\sqrt{2})} \approx 3.43 \,.
\end{align*}
\end{lemma}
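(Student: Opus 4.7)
The strategy is the classical method-of-mixtures proof of a self-normalized concentration inequality. The first step is to verify that $Y_s := X_s - \mu_s |X_s|$ is an $(\cF_s)$-martingale difference sequence: since $\mu_s = \Ep[X_s \mid \cF_{s-1}, X_s \neq 0]$, we have $\Ep[X_s \mid \cF_{s-1}] = \mu_s\, \Ep[|X_s| \mid \cF_{s-1}]$, and therefore $\Ep[Y_s \mid \cF_{s-1}] = 0$. On $\{|X_s| = 0\}$ the increment vanishes, and on $\{|X_s| = 1\}$ it takes two values in a zero-mean two-point distribution supported on an interval of length $2$. Applying Hoeffding's lemma conditional on $(\cF_{s-1}, |X_s|)$ gives $\Ep[e^{\lambda Y_s} \mid \cF_{s-1}, |X_s|] \leq \exp(\lambda^2 |X_s|/2)$ for every $\lambda \in \realset$, where I used $|X_s|^2 = |X_s|$. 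Taking the remaining conditional expectation and iterating across $s \leq t$ shows that, for each fixed $\lambda$, the process
\begin{equation*}
M_t^\lambda \,:=\, \exp\!\left(\lambda S_t - \tfrac{1}{2}\lambda^2 N_t\right)
\end{equation*}
is a non-negative $(\cF_t)$-supermartingale with $M_0^\lambda = 1$.

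The next step trades the free parameter $\lambda$ for a bound that holds uniformly in $t$. For a probability density $\pi$ on $\realset$, the mixture $\bar M_t := \int_\realset M_t^\lambda\, \pi(\lambda)\, \ud\lambda$ is still a non-negative supermartingale with $\bar M_0 = 1$ by Fubini, so Ville's maximal inequality gives $\bbP(\exists t \leq n : \bar M_t \geq 1/\delta) \leq \delta$. I would take $\pi$ to be uniform on a symmetric interval $[-a,a]$, with $a$ tuned so that the constant matches the statement. Completing the square in $\lambda$ inside the integral and changing variables to evaluate the Gaussian integral gives, on $\{N_t > 0\}$, the closed form
\begin{equation*}
\bar M_t \,=\, \frac{\sqrt{\pi}}{4a\sqrt{N_t}}\, e^{S_t^2/(2N_t)} \left[\erf\!\left(a\sqrt{N_t/2} - \frac{S_t}{\sqrt{2N_t}}\right) + \erf\!\left(a\sqrt{N_t/2} + \frac{S_t}{\sqrt{2N_t}}\right)\right].
\end{equation*}
By the symmetry of $\erf$, at least one of the two arguments equals $a\sqrt{N_t/2}+|S_t|/\sqrt{2N_t} \geq a\sqrt{N_t/2}$, so in the range where this quantity is $\geq \sqrt{2}$ the bracket is bounded below by $\erf(\sqrt{2})$. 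Choosing $a = \sqrt 2$ (and using $N_t \geq 1$) and combining with Ville's inequality, the event $\bar M_t < 1/\delta$ rearranges into $\exp(S_t^2/(2N_t)) < c\sqrt{N_t}/\delta$, which inverts to $|S_t| < \sqrt{2 N_t \log(c\sqrt{N_t}/\delta)}$ with $c$ matching the stated $4\sqrt{2/\pi}/\erf(\sqrt 2)$ after the arithmetic.

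The main obstacle is bookkeeping in the mixture computation: reproducing the exact constant $c = 4\sqrt{2/\pi}/\erf(\sqrt 2)$ requires the right choice of the prior $\pi$ (truncation radius, normalization), and the lower bound on the two-$\erf$ bracket is slightly different in the regime where $|S_t|/\sqrt{2N_t}$ is larger than $a\sqrt{N_t/2}$. This corner regime is handled by the deterministic bound $|S_t| \leq 2N_t$ (which dominates the right-hand side for very small $N_t$, rendering the tail event vacuous), or alternatively by a geometric peeling argument over dyadic ranges of $N_t$ paired with Doob's maximal inequality for $M_t^\lambda$ at a range-dependent $\lambda$. Everything else---Hoeffding's lemma, Fubini, Ville's inequality, and inverting a logarithm---is entirely routine.
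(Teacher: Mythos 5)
Your proposal follows essentially the same route as the paper's proof: the same exponential supermartingale $\exp(\lambda S_t - \tfrac12\lambda^2 N_t)$ obtained by conditioning on $\sigma(\cF_{t-1},|X_t|)$, the same uniform-mixture construction, the same closed-form evaluation of the mixture in terms of $\erf$, and the same maximal/stopping-time inequality. The one substantive difference is your choice of a \emph{symmetric} prior on $[-a,a]$ to capture both tails at once, versus the paper's one-sided uniform prior on $[0,2]$ for the right tail followed by symmetry and a union bound (which is where the factor $4$ rather than $2$ in $c$ comes from). This difference is not cosmetic, and it is exactly where your argument has a gap.

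With the symmetric prior, the bracket $\erf\bigl(a\sqrt{N_t/2}+|S_t|/\sqrt{2N_t}\bigr) + \erf\bigl(a\sqrt{N_t/2}-|S_t|/\sqrt{2N_t}\bigr)$ is decreasing in $|S_t|$, and $|S_t|$ can exceed $\sqrt2\,N_t$ with positive probability (e.g.\ $\mu_s=-1/2$ and a run of $X_s=+1$ gives $S_t=\tfrac32 N_t$). At the extreme $|S_t|=2N_t$ and $a=\sqrt2$ the bracket equals $\erf\bigl((1+\sqrt2)\sqrt{N_t}\bigr)-\erf\bigl((\sqrt2-1)\sqrt{N_t}\bigr)$, which decays to $0$ exponentially in $N_t$, so no uniform lower bound of the form $\erf(\sqrt2)$ holds. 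Your proposed patch---that the tail event is vacuous there because $|S_t|\le 2N_t$---only applies when $2N_t<\sqrt{2N_t\log(c\sqrt{N_t}/\delta)}$, i.e.\ for $N_t\lesssim\log(1/\delta)$; for larger $N_t$ the region $\sqrt2\,N_t<|S_t|\le 2N_t$ intersects the tail event. The regime is in fact rescuable, since for $\gamma=|S_t|/N_t\in[\sqrt2,2]$ one has $\gamma^2/2-(\gamma/\sqrt2-1)^2=\sqrt2\gamma-1>0$, so $e^{S_t^2/(2N_t)}$ beats the $1-\erf(\Theta(\sqrt{N_t}))$ decay of the bracket---but this needs a quantitative lower bound on $1-\erf$ that you have not supplied. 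The paper avoids the issue entirely: with the prior on $[0,2]$ and $S_t\in[0,2N_t]$ the bracket $\erf\bigl(S_t/\sqrt{2N_t}\bigr)+\erf\bigl((2N_t-S_t)/\sqrt{2N_t}\bigr)$ is minimized at the endpoints, where it equals $\erf(\sqrt{2N_t})\ge\erf(\sqrt2)$. (Separately, your closed form has a normalization slip---the prefactor should be $\sqrt{\pi}/(2a\sqrt{2N_t})$ rather than $\sqrt{\pi}/(4a\sqrt{N_t})$---but you flagged the constant bookkeeping yourself.)
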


See
\ifsup
\cref{sec:concentration proof}
\else 
the supplementary material
\fi
for the proof.

We also provide a minimax lower bound, the proof of which is deferred to \ifsup Appendix \ref{sec:proof of minimax lower bound}. \else the supplementary material. \fi
\begin{theorem}
\label{thm:minimax lower bound}
Suppose that $L = NK$ with $N$ an integer and $n \ge K$ and $n \geq N$ and $N \ge 8$. Then for any algorithm there exists a ranking problem such that
$\EE[R_n] \geq \sqrt{K L n} / (16 \sqrt{2})$.
\end{theorem}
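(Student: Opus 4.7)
The plan is a standard change-of-measure lower bound, constructing a family of hard instances for which the ranking problem reduces to $K$ effectively parallel $N$-armed bandit subproblems whose individual $\Omega(\sqrt{Nn})$ regrets add to give the desired $\sqrt{KLn}$ rate.

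First I would build the family. Partition $[L]$ into $K$ groups $G_g = \{(g-1)N+1,\ldots,gN\}$, and for each $\theta \in [N]^K$ define instance $\nu_\theta$ by setting attractiveness $\alpha_\theta((g-1)N+\theta_g) = \tfrac12+\epsilon$ for every $g$, and $\alpha_\theta(i) = \tfrac12$ for all other items, with $\epsilon > 0$ to be chosen. Take the click model to be the document-based model $v(a,k) = \alpha_\theta(a(k))\mathbf{1}\{k\le K\}$, which satisfies Assumptions~\ref{ass:zero}--\ref{ass:lowest-exam} as the paper has noted. Under $\nu_\theta$ the optimal ranking places the $K$ good items (one per group) in the top $K$ slots, with per-round value $K(\tfrac12+\epsilon)$. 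Letting $N_{g,i} = \sum_{t=1}^n \mathbf{1}\{(g-1)N+i \in \{A_t(1),\ldots,A_t(K)\}\}$ count the rounds in which item $(g-1)N+i$ appears in the top $K$,
\[
R_n(\nu_\theta) = \epsilon \sum_{g=1}^K \bigl(n - \EE_{\nu_\theta}[N_{g, \theta_g}]\bigr),
\]
so the task reduces to upper bounding the Bayes value $\EE_\pi[\EE_{\nu_\theta}[N_{g, \theta_g}]]$ under the uniform prior $\pi$ on $[N]^K$.

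For this I would apply a per-group change of measure. For $\theta, \theta'$ differing only in coordinate $g$, the two measures differ only in the Bernoulli means of two items in group $g$, so the divergence decomposition lemma together with $\KL(\mathrm{Ber}(\tfrac12+\epsilon), \mathrm{Ber}(\tfrac12)) \le 4\epsilon^2$ yields
\[
\KL(\PP_{\nu_\theta}, \PP_{\nu_{\theta'}}) \le 4\epsilon^2\bigl(\EE_{\nu_\theta}[N_{g, \theta_g}] + \EE_{\nu_\theta}[N_{g, \theta_g'}]\bigr).
\]
Pinsker's inequality, combined with the standard Auer--Cesa-Bianchi--Freund--Schapire averaging (sum the resulting bound over $\theta_g' \in [N]$, then apply Cauchy--Schwarz and the combinatorial bound $\sum_{i\in G_g} N_{g, i} \le n\min(K, N)$), produces an estimate of the form $\EE_\pi[\EE_{\nu_\theta}[N_{g,\theta_g}]] \le n\min(K,N)/N + O(n\epsilon\sqrt{n\min(K,N)/N})$. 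Summing over $g$ and plugging back into the regret formula gives a Bayes lower bound of the form $\epsilon Kn(1-\min(K,N)/N) - O(\epsilon^2 K n^{3/2}\sqrt{\min(K,N)/N})$; tuning $\epsilon \sim \sqrt{N/(nK)}$ (which is $\le 1/4$ under $n \ge N$ and $N \ge 8$) balances the two terms, and after routine constant tracking one obtains the claimed $\sqrt{KLn}/(16\sqrt 2)$ lower bound on the Bayes value; the maximum over $\theta$ dominates the average, yielding the worst-case bound.

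The main obstacle is choosing the right reference measure in the change-of-measure step. Comparing to a single null instance $\nu_0$ (with all attractiveness $\tfrac12$) pools KL budget across all $K$ groups and only yields $\sqrt{Ln}$, a factor $\sqrt K$ short of the target; performing the change of measure per group, so that the $K$ subproblems behave as near-independent $N$-armed bandits whose regrets add, is what recovers the $\sqrt{KLn}$ rate. Matching the exact $1/(16\sqrt 2)$ prefactor is then routine constant tracking in the Cauchy--Schwarz step, using the hypotheses $n\ge K$, $n\ge N$, $N\ge 8$ to absorb lower-order corrections.
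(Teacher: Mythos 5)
Your construction (document-based model, $K$ blocks of $N$ items, one planted item per block, per-block change of measure followed by averaging over the planted positions) is the same as the paper's, but the quantitative core of your argument falls a factor of $\sqrt{K}$ short of the stated bound, and the loss occurs exactly at the step you describe as routine. Your per-group information budget is the combinatorial bound $\sum_{i \in G_g} N_{g,i} \le n\min(K,N)$, the total number of displays of group-$g$ items over all $n$ rounds. Feeding this through Cauchy--Schwarz gives $\EE_\pi[\EE[N_{g,\theta_g}]] \le n\min(K,N)/N + O(n\epsilon\sqrt{n\min(K,N)/N})$, and optimizing $\epsilon$ in the resulting regret expression yields a Bayes bound of order $(1-\min(K,N)/N)^2\sqrt{nN\min(K,N)} \le \sqrt{Ln}$, not $\sqrt{KLn} = K\sqrt{Nn}$. (Your proposed tuning $\epsilon \sim \sqrt{N/(nK)}$ makes the leading term $\epsilon K n \sim \sqrt{KNn} = \sqrt{Ln}$ as well, and when $K \ge N$ the factor $1-\min(K,N)/N$ vanishes, so the bound is vacuous in that regime.) You correctly diagnose that comparing to a single global null instance costs $\sqrt K$, but the same $\sqrt K$ is lost again when each block's KL budget is allowed to be of order $nK$ rather than $n$: a learner that dumps many slots into one block inflates that block's divergence far beyond $n$.

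The ingredient your outline is missing is the paper's per-block stopping time. Write the regret in two ways, $R_n(m) = \Delta\sum_k(n - \EE_m[T_{km_k}(n)]) = \Delta\sum_k\sum_{t}\EE_m[E_k(t)]$, where $E_k(t)$ counts the suboptimal group-$k$ items displayed in round $t$, and lower bound by half the maximum of the two. With $\tau_k = \min\{n, \min\{t : T_k(t) \ge n\}\}$, if $\tau_k < n$ then $\sum_t E_k(t) \ge T_k(\tau_k) - T_{km_k}(\tau_k) \ge n - T_{km_k}(\tau_k)$, so in either case $R_n(m) \ge \tfrac{\Delta}{2}\sum_k(n - \EE_m[T_{km_k}(\tau_k)])$, while $T_k(\tau_k) \le n+K$ deterministically. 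Computing the relative entropy only up to $\tau_k$ (legitimate because $T_{km_k}(\tau_k)$ is $\cF_{\tau_k}$-measurable) caps the per-block budget at $n+K$ instead of $n\min(K,N)$, which permits $\Delta \sim \sqrt{N/n}$ and recovers $\Delta K n \sim K\sqrt{Nn} = \sqrt{KLn}$. Without this device, or an equivalent one exploiting the identity $\sum_k T_k(n) = nK$ to argue that most blocks receive budget $O(n)$, your argument as written cannot reach the claimed rate.
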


The proof of this result only makes use of ranking problems in the document-based model. This also corresponds to a lower bound
for $m$-sets in online linear optimization with semi-bandit feedback. Despite the simple setup and abundant literature, we are not aware of any work where a lower bound of this form
is presented for this unstructured setting.

%!TEX root = Paper.tex

\section{Experiments}
\label{sec:experiments}

We experiment with the \emph{Yandex} dataset \cite{yandex}, a dataset of $167$ million search queries. In each query, the user is shown $10$ documents at positions $1$ to $10$ and the search engine records the clicks of the user. We select $60$ frequent search queries from this dataset, and learn their CMs and PBMs using PyClick \cite{chuklin15click}. The parameters of the models are learned by maximizing the likelihood of observed clicks. Our goal is to rerank $L = 10$ most attractive items with the objective of maximizing the expected number of clicks at the first $K = 5$ positions. This is the same experimental setup as in Zoghi \etal~\cite{zoghi17online}. This is a realistic scenario where the learning agent can only rerank highly attractive items that are suggested by some production ranker \cite{zoghi16clickbased}.

$\toprank$ is compared to $\batchrank$ \cite{zoghi17online} and $\cascadeklucb$ \cite{kveton15cascading}. We used the implementation of $\batchrank$ by Zoghi \etal~\cite{zoghi17online}. We do not compare to ranked bandits \cite{radlinski08learning}, because they have already been shown to perform poorly in stochastic click models, for instance by Zoghi \etal~\cite{zoghi17online} and Katariya \etal~\cite{katariya16dcm}. The parameter $\delta$ in $\toprank$ is set as $\delta = 1 / n$, as suggested in \cref{thm:ranking:main}.
\begin{figure*}[h]
  \centering
  \includegraphics{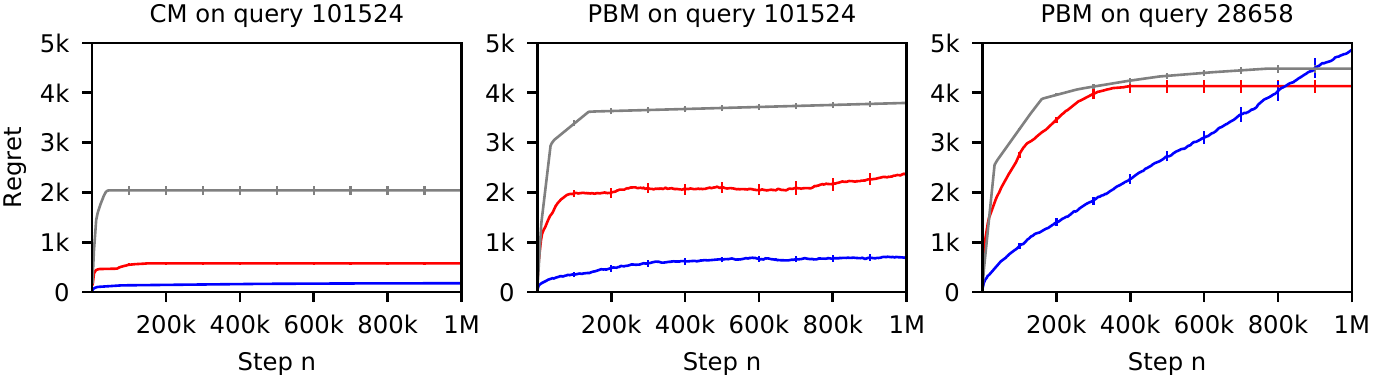}
  \caption{The $n$-step regret of $\toprank$ (red), $\cascadeklucb$ (blue), and $\batchrank$ (gray) in three problems. The results are averaged over $10$ runs. The error bars are the standard errors of our regret estimates.}
  \label{fig:queries}
\end{figure*}
\begin{figure*}[h]
  \centering
  \includegraphics{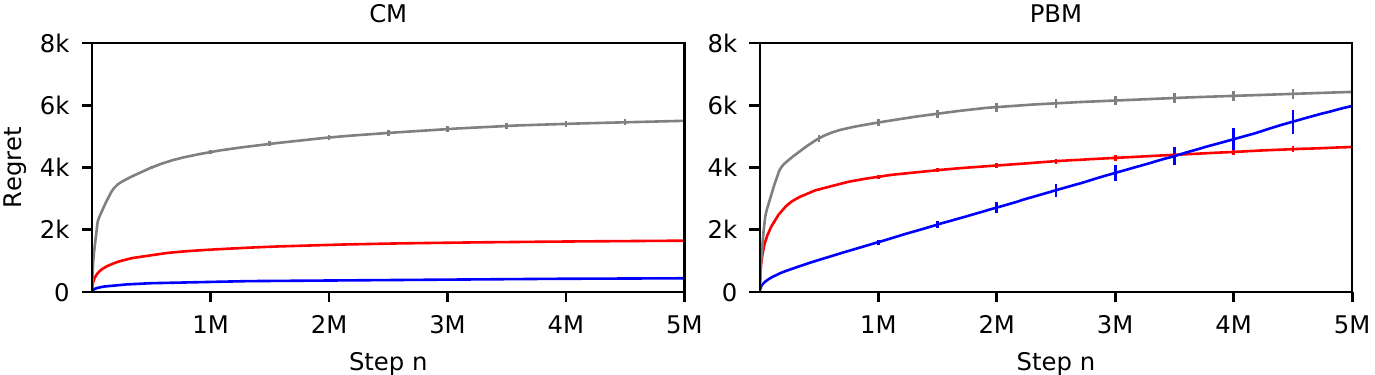}
  \caption{The $n$-step regret of $\toprank$ (red), $\cascadeklucb$ (blue), and $\batchrank$ (gray) in two click models. The results are averaged over $60$ queries and $10$ runs per query. The error bars are the standard errors of our regret estimates.}
  \label{fig:averages}
\end{figure*}
\cref{fig:queries} illustrates the general trend on specific queries. In the cascade model, $\cascadeklucb$ outperforms $\toprank$. This should not come as a surprise because $\cascadeklucb$ heavily exploits the knowledge of the model. Despite being a more general algorithm, \toprank\ consistently outperforms \batchrank\ in the cascade model. In the position-based model, $\cascadeklucb$ learns very good policies in about two thirds of queries, but suffers linear regret for the rest. In many of these queries, $\toprank$ outperforms $\cascadeklucb$ in as few as one million steps. In the position-based model, $\toprank$ typically outperforms $\batchrank$. 

The average regret over all queries is reported in \cref{fig:averages}. We observe similar trends to those in \cref{fig:queries}. In the cascade model, the regret of $\cascadeklucb$ is about three times lower than that of $\toprank$, which is about three times lower than that of $\batchrank$. In the position-based model, the regret of $\cascadeklucb$ is higher than that of $\toprank$ after $4$ million steps. The regret of $\toprank$ is about $30\%$ lower than that of $\batchrank$. In summary, we observe that $\toprank$ improves over $\batchrank$ in both the cascade and position-based models. The worse performance of $\toprank$ relative to $\cascadeklucb$ in the cascade model is offset by its robustness to multiple click models.

%!TEX root = Paper.tex

\section{Conclusions}
\label{sec:conclusions}

We introduced a new click model for online ranking that subsumes previous models.
Despite the increased generality, the new algorithm enjoys stronger regret guarantees, an easier and more insightful proof and improved empirical performance.
We hope the simplifications can inspire even more interest in online ranking. We also proved a lower bound for combinatorial linear semi-bandits with $m$-sets that 
improves on the bound by \citeauthornum{UNK10}.
We do not currently have matching upper and lower bounds. The key to understanding minimax lower bounds is to identify what makes a problem hard. In many bandit models 
there is limited flexibility, but our assumptions are so weak that the space of all $v$ satisfying Assumptions~\ref{ass:zero}--\ref{ass:lowest-exam} is quite large 
and we do not yet know what is the hardest case. 
This difficulty is perhaps even greater if the objective is to prove instance-dependent or asymptotic bounds where the results usually depend on 
solving a regret/information optimization problem \cite{LS16}.
%A curious question is whether or not the regret of $\toprank$ can be improved without changing the algorithm under the cascade, position and
%document-based models.
Ranking becomes increasingly difficult as the number of items grows. In most cases where $L$ is large, however, one would expect the items to be structured and this should then be exploited. 
This has been done for the cascade model by assuming a linear structure \cite{zong16cascading,li16contextual}. Investigating this possibility, but with more relaxed assumptions seems like
an interesting future direction.

\bibliography{all,References}

\ifsup

\clearpage
\onecolumn
\appendix

\section{Proof of weaker assumptions} \label{sec:weaker}

Here we show that Assumptions~\ref{ass:zero}--\ref{ass:lowest-exam} are weaker than those made by
\citeauthornum{zoghi17online}, who assumed that the click probability factors into
$v(a, k) = \beta(a(k)) \chi(a, k)$
where $\beta : [L] \to [0,1]$ is the attractiveness function and $\chi : \cA \times [L] \to [0,1]$ is the examination probability. They assumed that:
\begin{enumerate}
\item $\chi(a, k) = 0$ for all actions $a$ and positions $k > K$.
\item $\chi(a, k) \geq \chi(a^*, k)$ for all positions $k$ and actions $a$.
\item $\chi(a, k) \geq \chi(a, k')$ for all actions $a$ and positions $k < k'$.
\item $\chi(a, k)$ only depends on the unordered set $\{a(1),\ldots,a(k-1)\}$.
\item When $\beta(i) \geq \beta(j)$ and $a^{-1}(i) > a^{-1}(j)$, then $\chi(a, a^{-1}(i)) \geq \chi(\sigma \circ a, a^{-1}(i))$ where $\sigma$ exchanges items $i$ and $j$. 
\item $\sum_{k=1}^K \beta(a(k)) \chi(a, k)$ is maximized by $a^* = (1,2,\ldots,L)$.
\end{enumerate}
We now show that these assumptions are stronger than Assumptions~\ref{ass:zero}--\ref{ass:lowest-exam}.
Let $\beta$ and $\chi$ be attraction and examination functions satisfying the six conditions above. We need to find choices of $v$ and $\alpha$ such that
$v(a, k) = \beta(a(k)) \chi(a, k)$ for all actions $a$ and positions $k$ and where $v$ and $\alpha$ satisfy Assumptions~\ref{ass:zero}--\ref{ass:lowest-exam}.
First define $\alpha(i) = \beta(i) \chi(a, 1)$ where $a$ is any action. By item 4 this does not depend on the choice of $a$, which also means
that $\alpha(i) \beta(j) = \alpha(j) \beta(i)$ for any pair of items $i$ and $j$.
Then let $v(a, k) = \beta(a(k)) \chi(a, k)$.
\cref{ass:zero} is satisfied trivially since item 1 implies that $\chi(a, k) = 0$ whenever $k > K$.
\cref{ass:pairwise-exchange} is also satisfied trivially by item 6.
For \cref{ass:value-drop} we consider two cases. Let $i$ and $j$ be items with $\alpha(i) \geq \alpha(j)$ and suppose that $a$ is an action with $a^{-1}(i) < a^{-1}(j)$ and 
$\sigma$ be the permutation that exchanges $i$ and $j$.
Then
\begin{align*}
v(a, a^{-1}(i)) 
&= \beta(i) \chi(a, a^{-1}(i)) 
= \beta(i) \chi(\sigma \circ a, a^{-1}(i)) \\
&= \frac{\beta(i)}{\beta(j)} \beta(j) \chi(\sigma \circ a, a^{-1}(i))
= \frac{\beta(i)}{\beta(j)} v(\sigma \circ a, a^{-1}(i))
= \frac{\alpha(i)}{\alpha(j)} v(\sigma \circ a, a^{-1}(i))\,.
\end{align*}
Now suppose that $a^{-1}(i) > a^{-1}(j)$, then the claim follows from item 5.
\cref{ass:lowest-exam} follows easily from item 2 by noting that
\begin{align*}
v(a, k) = \beta(a(k)) \chi(a, k) \geq \beta(a(k)) \chi(a^*,k) = \beta(a^*(k)) \chi(a^*,k) = v(a^*, k)\,.
\end{align*}
Note that we did not use item 3 at all and one can easily construct a function $v$ that satisfies Assumptions~\ref{ass:zero}--\ref{ass:lowest-exam} while not satisfying items 1--6 above.
\citeauthornum{zoghi17online} showed that their assumptions were weaker than the position-based and cascade models, which therefore also holds for our assumptions.

\section{Proof of \cref{lem:ranking:conc}}
\label{sec:concentration proof}

\newcommand{\cG}{\mathcal G}

We first show a bound on the right tail of $S_t$. A symmetric argument suffices for the left tail.
Let $Y_s = X_s - \mu_s |X_s|$ and $M_t(\lambda) = \exp(\sum_{s=1}^t (\lambda Y_s - \lambda^2 |X_s|/2))$.
Define filtration $\cG_1 \subset \cdots \subset \cG_n$ by $\cG_t = \sigma(\cF_{t-1},|X_t|)$.
Using the fact that $X_s \in \{-1,0,1\}$ we have for any $\lambda > 0$ that
\begin{align*}
\EE[\exp(\lambda Y_s - \lambda^2 |X_s|/2) \mid \cG_s] \leq 1\,. 
\end{align*}
Therefore $M_t(\lambda)$ is a supermartingale for any $\lambda > 0$. 
The next step is to use the method of mixtures \cite{PLS08} with a uniform distribution on $[0,2]$. Let $M_t = \int^2_0 M_t(\lambda) d\lambda$. 
Then Markov's inequality shows that for any $\cG_t$-measurable stopping time $\tau$ with $\tau \leq n$ almost surely,
$\bbP\left(M_\tau \geq 1/\delta \right) \leq \delta$.
Next we need a bound on $M_\tau$. The following holds whenever $S_t \geq 0$.
\begin{align*}
M_t 
&= \frac{1}{2} \int^2_0 M_t(\lambda) d\lambda 
= \frac{1}{2} \sqrt{\frac{\pi}{2N_t}} \left(\erf\left(\frac{S_t}{\sqrt{2N_t}}\right) + \erf\left(\frac{2N_t - S_t}{\sqrt{2N_t}}\right)\right) \exp\left(\frac{S_t^2}{2N_t}\right) \\
&\geq \frac{\erf(\sqrt{2})}{2} \sqrt{\frac{\pi}{2N_t}} \exp\left(\frac{S_t^2}{2N_t}\right)\,.
\end{align*}
The bound on the upper tail completed via the stopping time argument of \cite{Fre75} (see also \cite{AST11}), which shows that
\begin{align*}
\bbP\left(\text{exists } t \leq n : S_t \geq \sqrt{2 N_t \log\left(\frac{2}{\delta \erf(\sqrt{2})} \sqrt{\frac{2N_t}{\pi}}\right)} \text{ and } N_t > 0\right) \leq \delta\,. 
\end{align*}
The result follows by symmetry and union bound.

\section{Proof of problem independent bound in Theorem~\ref{thm:ranking:main}}

Following the proof of the first part until \cref{eq:ranking:regret} we have
\begin{align*}
R_n \leq n K \bbP(F_n) + \sum_{j=1}^L \sum_{i=1}^{\min\{K, j-1\}} \EE\left[\one{F_n^c} \sum_{t=1}^n U_{tij}\right]\,.
\end{align*}
As before the first term is bounded using \cref{lem:ranking:nofail}.
Then using the first part of the proof of \cref{lem:ranking:U-bound} shows that 
\begin{align*}
\one{F_n^c} \sum_{t=1}^n U_{tij} \leq 1 + \sqrt{2N_{nij} \log\left(\frac{c\sqrt{n}}{\delta}\right)}\,.
\end{align*}
Substituting into the previous display and applying Cauchy-Schwartz shows that
\begin{align*}
R_n 
&\leq n K \bbP(F_n) + KL + \sqrt{2KL \EE\left[\sum_{j=1}^L \sum_{i=1}^{\min\{K, j-1\}} N_{nij} \right] \log\left(\frac{c\sqrt{n}}{\delta}\right)}\,.
\end{align*}
Writing out the definition of $N_{nij}$ reveals that we need to bound
\begin{align*}
\EE\left[\sum_{j=1}^L \sum_{i=1}^{\min\{K, j-1\}} N_{nij}\right] 
&\leq \sum_{t=1}^n \EE\left[\EE_{t-1}\left[\sum_{d=1}^{M_t} \sum_{j \in \cP_{td}} \sum_{i \in \cP_{td} \cap [K]} U_{tij} \Bigg| \cF_{t-1}\right]\right] \\
&\leq \sum_{t=1}^n \EE\left[\EE_{t-1}\left[\sum_{d=1}^{M_t} \sum_{j \in \cP_{td}} \sum_{i \in \cP_{td} \cap [K]} (C_{ti} + C_{tj}) \Bigg| \cF_{t-1}\right]\right]
= \textrm{(A)}\,.
\end{align*}
Expanding the two terms in the inner sum and bounding each separately leads to
\begin{align*}
\EE_{t-1}\left[\sum_{d=1}^{M_t} \sum_{j \in \cP_{td}} \sum_{i \in \cP_{td} \cap [K]} C_{ti} \,\Bigg|\, \cF_{t-1}\right]
&= \EE_{t-1}\left[\sum_{d=1}^{M_t} |\cP_{td}| \sum_{i \in \cP_{td} \cap [K]} C_{ti} \,\Bigg|\, \cF_{t-1}\right] \\
&\leq \sum_{d=1}^{M_t} |\cI_{td} \cap [K]| |\cP_{td} \cap [K]| \leq K^2\,.
\end{align*}
For the second term,
\begin{align*}
\EE_{t-1}\left[\sum_{d=1}^{M_t} \sum_{j \in \cP_{td}} \sum_{i \in \cP_{td} \cap [K]} C_{tj} \,\Bigg|\, \cF_{t-1}\right]
&= \EE_{t-1}\left[\sum_{d=1}^{M_t} |\cP_{td} \cap [K]| \sum_{j \in \cP_{td}} C_{tj} \,\Bigg|\, \cF_{t-1}\right] \\ 
&\leq \sum_{d=1}^{M_t} |\cP_{td} \cap [K]| |\cI_{td} \cap [K]| \leq K^2\,.
\end{align*}
Hence $\textrm{(A)} \leq nK^2$ and $R_n \leq n K \bbP(F_n) + KL + \sqrt{4K^3 L n \log\left(\frac{c\sqrt{n}}{\delta}\right)}$ and the result follows from \cref{lem:ranking:nofail}.

\section{Proof of minimax lower bound in Theorem~\ref{thm:minimax lower bound}}
\label{sec:proof of minimax lower bound}

Throughout the section we assume a fixed learner.
The lower bound is proven for the document-based model where for attractiveness function $\alpha : [L] \to [0,1]$ the probability of clicking on the $i$th item in round $t$ is 
\begin{align*}
\bbP_{t-1}(C_{ti} = 1) = \alpha(i) \one{A_t^{-1}(i) \leq K}\,.
\end{align*}
We assume that $C_{t1},\ldots,C_{tL}$ are conditionally independent under $\bbP_{t-1}$.
Recall that we have assumed that $L = NK$ for integer $N$, which means there is a bijection between $\phi : [L] \to [K] \times [N]$. 
For each $k \in [K]$ the items $\{\phi^{-1}(k, i) : i \in [N]\}$ are referred to as a block.
From now on we abuse notation by 
writing $\alpha(k, i) = \alpha(\phi^{-1}(k, i))$ and $a^{-1}(k, i) = a^{-1}(\phi^{-1}(k, i))$ for any action $a \in \cA$. As is usual for lower bounds, we 
define a set of ranking problems and prove that no algorithm can do well on all of these problems
simultaneously. Given $m \in [N]^K$ define $\alpha_m$ by
\begin{align*}
\alpha_m(k, i) = \begin{cases}
\frac{1}{2} + \Delta & \text{if } m_k = i \\
\frac{1}{2} & \text{otherwise}\,,
\end{cases}
\end{align*}
where $\Delta > 0$ is a constant to be tuned subsequently. Given $k \in [K]$ and $m \in [N]^K$ we let $\alpha_{m_{-k}}$ be equal to $\alpha_m$ except that $\alpha_m(k, i) = 1/2$ for all $i \in [N]$.
Given $\alpha(\cdot, \cdot)$ we call item $(k,i)$ suboptimal if $\alpha(k, i) = 1/2$.
Given a fixed $m$ the optimal action satisfies $\alpha(a(k)) = 1/2 + \Delta$ for all $k \leq K$ and $\alpha(a(k)) = 1/2$ for all $k > K$.
The regret suffered by the learner in the document-based ranking problem determined by $\alpha_m$ is denoted by $R_n(m)$.
Given $k \in [K]$ and $i \in [N]$ define 
\begin{align*}
T_{ki}(t) &= \sum_{s=1}^t \one{A_s^{-1}(k, i) \leq K} &
T_k(t) &= \sum_{i=1}^N T_{ki}(t) \,.
\end{align*}
For the final piece of notation let $E_k(t)$ be the number of suboptimal items in block $k$ that are shown to the user in round $t$.
\begin{align*}
E_k(t) = \sum_{i=1}^N \one{A_t^{-1}(k, i) \leq K \text{ and } \alpha(k, i) = 1/2}\,.
\end{align*}
The first real step of the proof is to notice that $R_n(m)$ can be written in two ways:
\begin{align}
R_n(m) 
&= \Delta \sum_{i=1}^K (n - \EE_m[T_{km_k}(n)])
= \Delta \sum_{k=1}^K \sum_{t=1}^n \EE_m[E_k(t)] \nonumber \\
&\geq \frac{\Delta}{2} \sum_{i=1}^K \max\set{n - \EE_m[T_{km_k}(n)],\, \sum_{t=1}^n \EE_m[E_k(t)]}\,,
\label{eq:lower-regret}
\end{align}
where we used the fact that $(x+y)/2 \geq \max(x, y) / 2$ for nonnegative $x, y$.
Since $E_k(t)$ is nonnegative it follows that for stopping time $\tau_k = \min\{n,\, \min\{t : T_k(t) \geq n\}\}$ we have
\begin{align*}
\sum_{t=1}^n \EE_m[E_k(t)] \geq 
\EE_m\left[\sum_{t=1}^{\tau_k} E_k(t)\right]
= \EE_m\left[T_k(\tau_k) - T_{km_k}(\tau_k)\right]\,.
\end{align*}
Since the algorithm cannot choose more than $K$ actions to show to the user in any round it holds that $T_k(\tau_k) \leq n + K$.
Substituting this into \cref{eq:lower-regret} shows that
\begin{align}
R_n(m) \geq \frac{\Delta}{2} \sum_{i=1}^K (n - \EE_m[T_{km_k}(n)])\,.
\label{eq:lower-regret-2}
\end{align}
The next step is to apply the randomization hammer over all $m \in [N]^K$. We use the shorthand $m_{-k} = (m_1,\ldots,m_{k-1},m_{k+1},\ldots,m_K)$.
Now \cref{eq:lower-regret-2} shows that
\begin{align}
\sum_{m \in [N]^K} R_n(m) 
&\geq \frac{\Delta}{2} \sum_{m \in [N]^K} \sum_{k=1}^K (n - \EE_m[T_{km_k}(n)]) \nonumber \\
&= \frac{\Delta}{2} \sum_{k=1}^K \sum_{m_{-k} \in [N]^{K-1}} \sum_{m_k \in [N]} (n - \EE_m[T_{km_k}])\,.
\label{eq:lower-regret-3}
\end{align}
Let us now fix $k \in [K]$ and $m_{-k} \in [N]^{K-1}$ and let $\bbP_m$ be the law of $T_k(\tau_k)$ when the policy is interacting with the ranking problem determined by $\alpha_m$. Then
\begin{align}
\sum_{m_k \in [N]} \EE_m[T_{km_k}(\tau_k)] 
&\leq \sum_{m_k \in [N]} \EE_{m_{-k}}[T_{km_k}(\tau_k)] + n \sqrt{\frac{1}{2} \KL(\bbP_{m_{-k}}, \bbP_m)} \nonumber \\
&\leq \sum_{m_k \in [N]} \EE_{m_{-k}}[T_{km_k}(\tau_k)] + n \sqrt{4 \Delta^2 \EE_{m_{-k}}[T_{km_k}(\tau_k)]} \nonumber \\
&\leq n + K + n \sqrt{4 N \Delta^2 (n+K)} \leq nN/2\,. 
\label{eq:lower-regret-4}
\end{align}
where in the first line we used Pinsker's inequality and the second we used the fact that the relative entropy between Bernoulli distributions with means $1/2$ and $1/2+\Delta$ is at
most $8\Delta^2$ for $\Delta^2 \leq 1/8$, which is easily established by bounding the relative entropy by the $\chi$-squared distance \cite{Tsy08}. We also used the chain rule for
relative entropy. The novelty here is that we only need to calculate the relative entropy up to time $\tau_k$ because $T_k(\tau_k)$ is $\cF_{\tau_k}$-measurable.
The second last inequality follows since $T_k(\tau_k) \leq n + K$ and by Cauchy-Schwartz. The last inequality is satisfied by choosing 
\begin{align*}
\Delta = \sqrt{\frac{N}{16(n+K)}}\,.
\end{align*}
Substituting \cref{eq:lower-regret-4} into \cref{eq:lower-regret-3} shows that
\begin{align*}
\sum_{m \in [N]^K} R_n(m) 
\geq \Delta \sum_{k=1}^K \sum_{m_{-k} \in [N]^{K-1}} \frac{nN}{2} 
= \frac{\Delta N^{K+1} n}{4}\,.
\end{align*}
Since there exactly $N^K$ items appearing the sum on the left-hand side it follows there exists an $m \in [N]^K$ such that
\begin{align*}
R_n(m) \geq \frac{\Delta N n}{4} = \frac{1}{16} \sqrt{\frac{n^2 K^2 N}{n+K}} = \frac{1}{16} \sqrt{\frac{n^2 K L}{n+K}} \geq \frac{1}{16} \sqrt{\frac{n KL}{2}} \,,
\end{align*}
which completes the proof.

\fi

\end{document}